\documentclass{article}

\usepackage{microtype}
\usepackage{graphicx}
\usepackage{subcaption}
\usepackage{booktabs} 
\usepackage{diagbox}
\usepackage{multirow}
\usepackage{hyperref}
\usepackage{enumitem}

\usepackage[preprint]{icml2026}

\usepackage{amsmath}
\usepackage{amssymb}
\usepackage{mathtools}
\usepackage{amsthm}

\usepackage[capitalize,noabbrev]{cleveref}
\usepackage{notation}

\theoremstyle{plain}
\newtheorem{theorem}{Theorem}[section]
\newtheorem{proposition}[theorem]{Proposition}

\newtheorem{corollary}[theorem]{Corollary}
\theoremstyle{definition}
\newtheorem{definition}[theorem]{Definition}
\newtheorem{assumption}[theorem]{Assumption}
\theoremstyle{remark}

\usepackage[textsize=tiny]{todonotes}

\icmltitlerunning{Your GFlowNet Secretly Learns an Optimal Transport Plan}

\begin{document}

\twocolumn[
  \icmltitle{Your GFlowNet Secretly Learns an Optimal Transport Plan}



  \icmlsetsymbol{equal}{*}

  \begin{icmlauthorlist}
    \icmlauthor{Ian Maksimov}{yyy}
    \icmlauthor{Nikita Morozov}{yyy}
    \icmlauthor{Denis Belomestny}{yyy,comp}
    \icmlauthor{Sergey Samsonov}{yyy}
  \end{icmlauthorlist}

  \icmlaffiliation{yyy}{HSE University}
  \icmlaffiliation{comp}{Duisburg-Essen University}

  \icmlcorrespondingauthor{Ian Maksimov}{yavmaksimov@edu.hse.ru}

  \icmlkeywords{Machine Learning, ICML}

  \vskip 0.3in
]



\printAffiliationsAndNotice{}  


\begin{abstract}
Generative Flow Networks (GFlowNets) are a framework for sampling structured objects via stochastic trajectories in a directed graph. In this work, we establish a theoretical connection between non-acyclic GFlowNets and optimal transport (OT). We show that fixing the initial flow distribution in a minimum-flow GFlowNet reduces its objective to a Kantorovich OT problem with graph-induced shortest path costs. At the optimum, the learned GFlowNet policy therefore encodes an optimal transport plan from the source distribution to the target distribution: we show that sampling trajectories from the minimum-flow GFlowNet recovers the corresponding optimal coupling. Our formulation enables applying the GFlowNet learning framework to OT problems on large graphs via edge flows and neural parameterization. Experiments confirm agreement with exact OT solvers and demonstrate that GFlowNets can learn high-quality transport plans.
\end{abstract}

\section{Introduction}

\label{sec:intro}
Generative Flow Networks (GFlowNets, \citealp{bengio2021flow}) are a class of methods for sampling structured discrete objects from probability distributions given by unnormalized probability mass function. A GFlowNet constructs objects sequentially through stochastic transitions induced by a forward policy. GFlowNets have been successfully used in a range of applications, including molecule generation \cite{bengio2021flow, shen2024tacogfn, koziarski2024rgfn, cretu2025synflownet}, biological sequence design \cite{jain2022biological, kim2024improvedoff}, combinatorial optimization~\cite{zhang2023solving, zhang2023robust, kim2025ant}, and the fine-tuning of large language models and diffusion models~\cite{hu2023amortizing, venkatraman2024amortizing, uehara2024understanding, zhang2024improving, kwon2024gdpo, lee2025learning, zhu2025flowrl}. The theoretical foundations of GFlowNets were established in \cite{bengio2023gflownet,lahlou2023theory}.

GFlowNet define a generation process that follows a sequence of stochastic transitions of discrete nature, inducing a graph structure $\cG$. The generation starts at a special initial state $s_0$, and then moves along the edges of the graph until reaching a special sink state $s_f$. The last state $x$ from which the process transitioned into $s_f$ is then considered the final sample. While GFlowNets were initially developed to operate in acyclic graph environments~\cite{bengio2021flow}, further works generalized them to non-acyclic graphs~\cite{brunswic2024theory, pmlr-v267-morozov25a}. In this setting, the expected length of a trajectory sampled in the graph becomes a key quantity controlling sampling efficiency. To effectively minimize it in practice, ~\citet{brunswic2024theory} and \citet{pmlr-v267-morozov25a} proposed flow regularization techniques. Moreover, non-acyclic GFlowNets have been recently applied to tasks that lie outside the domain of sampling, e.g., imitation learning~\cite{brunswic25ergodic}, multi-agent learning~\cite{brunswic2025theory} and shortest path problems \cite{morozov2026learningshortestpathsgenerative}, further demonstrating the generality of the framework.

Optimal transport, on the other hand, provides a principled way to compare probability distributions by taking into account the geometry and structure of the underlying space \citep{villani2008optimal, peyre2019computational}. The central object is a transport plan: a coupling that specifies how much mass should be moved from each source point to each target point while minimizing a prescribed transportation cost. This formulation is especially attractive in machine learning because many data objects - images, documents, point clouds, empirical samples, and learned representations can naturally be viewed as discrete distributions. Unlike purely pointwise divergences, optimal transport distances remain meaningful and can exploit semantic or geometric structure through the ground cost. This has led to applications in image retrieval via Earth Mover’s Distance \citep{rubner2000earth}, natural language processing via Word Mover’s Distance \citep{kusner2015word}, domain adaptation \citep{courty2017optimal}, and generative modeling via Wasserstein objectives \citep{arjovsky2017wasserstein}, as well as representation learning and barycenter-based methods \citep{tolstikhin2018wasserstein,agueh2011barycenters}.

In this paper, we uncover a connection between the minimum-flow formulation of non-acyclic GFlowNets and discrete optimal transport. The main idea of this connection is based on two facts: 1) if an additional constraint is added to probabilities of transitions from the initial state $s_0$, GFlowNet effectively induces a transport plan between two probability distributions; 2) minimizing the total flow is equivalent to minimizing the graph distance transport cost of the aforementioned transport plan. This equivalence provides a novel view on the GFlowNet learning problem, as well as allows for the plethora of existing GFlowNet learning algorithms to be applied to OT problems.

\begin{figure}
    \centering
    \includegraphics[width=0.9\linewidth]{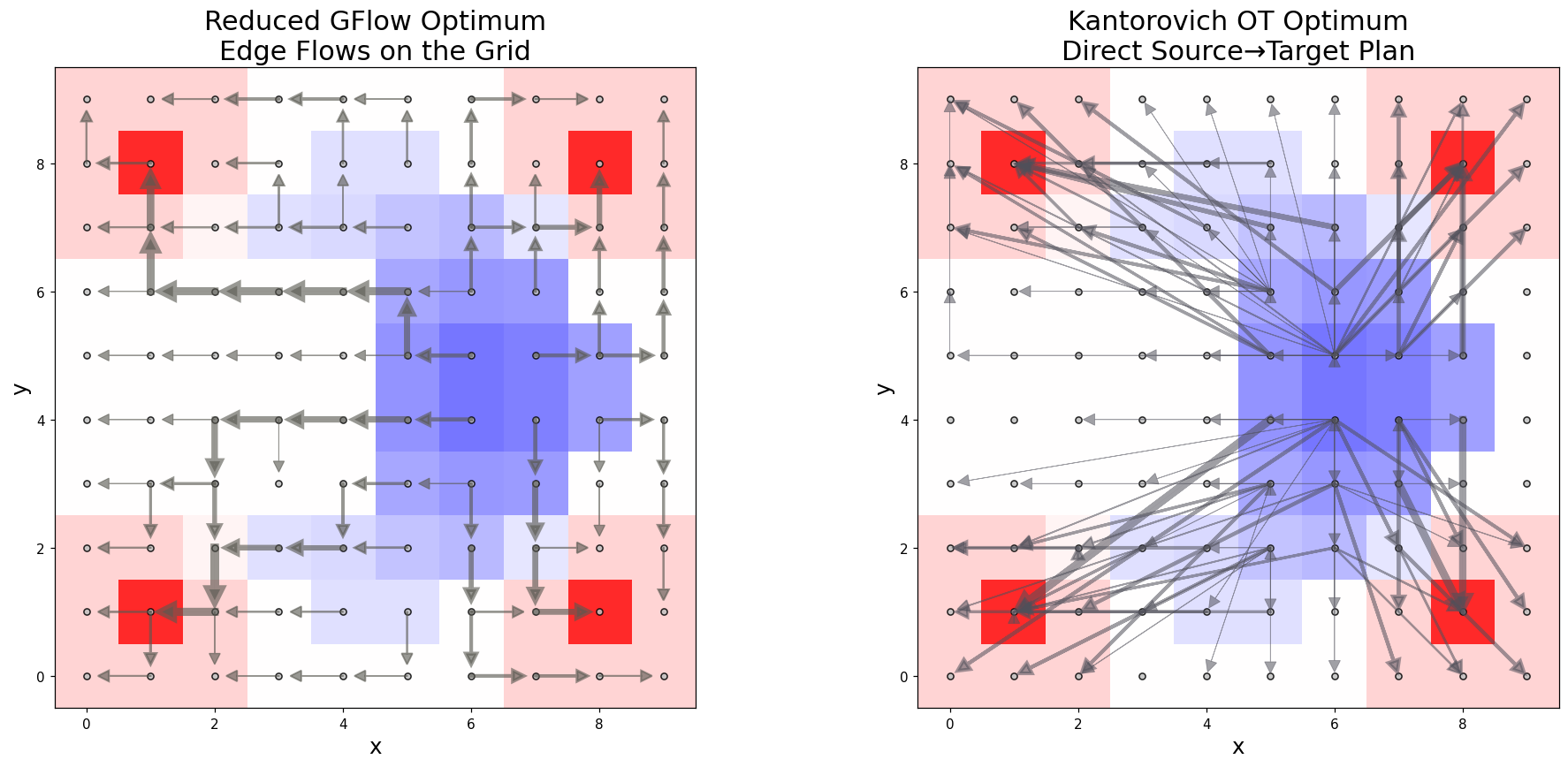}
    \caption{Visualization of the solution to the GFlowNet LP problem and Kantorovich OT optimal plan on a hypergrid environment. OT induces optimal coupling that connects source and target states and how much mass is transfered between them. GFlowNet on the other hand builds a policy that samples target states starting from source states, thus inducing a coupling with respect to the shortest path metric. Both formulations achieve the same transport cost of $4.351$. \vspace{-0.2cm}}
    \label{fig:exact_solutions}
\end{figure}

The main contributions of the paper can be summarized as follows:

\begin{enumerate}
    \item We show that the minimum-flow non-acyclic GFlowNet learning problem~\cite{pmlr-v267-morozov25a} can be equivalently formulated as a linear programming problem. We then theoretically demonstrate that fixing the initial edge-flow distribution in it effectively turns the minimum-flow objective into a Kantorovich optimal transport problem where the cost is defined as a distance in the graph, which is also equivalent to the discrete graph-based formulation of the Beckmann problem~\cite{beckmann1952continuous, essid2018quadratically}. The optimal GFlowNet forward policy in this formulation samples optimal paths between the initial and the terminal distributions, thus inducing an optimal coupling.  As a particular case, this construction recovers the shortest path theory of~\citet{morozov2026learningshortestpathsgenerative}.
    \item We show that the GFlowNet learning framework provides an effective way to approximate solutions to graph optimal transport problems. Based on a neural parameterization, GFlowNet objectives allow to learn a policy that aims to recover an optimal coupling. We experimentally demonstrate the ability of the framework to recover exact OT solutions in cases when they are tractable, as well as its scalability, exhibited by the ability to efficiently approximate solutions with the growth of the combinatorial space where exact solutions become intractable. We thus expand the generality of the GFlowNet framework, showing that it extends into the computational optimal transport domain.
\end{enumerate}

\vspace{-0.2cm}

\section{Background}

\vspace{-0.1cm}
\subsection{Non-Acyclic GFlowNets}\label{sec:gflow_background}

\citet{brunswic2024theory} and \citet{pmlr-v267-morozov25a} extend the theoretical construction of GFlowNets \citep{bengio2023gflownet} to non-acyclic environments. We use theory and notation of \citet{pmlr-v267-morozov25a} which is summarized below as a starting point.

The sequential sampling process is described by a directed graph \(\cG = (\mathcal{S}, \mathcal{E})\), where \(\mathcal{S}\) is a finite state space and \(\mathcal{E} \subseteq \mathcal{S} \times \mathcal{S}\) is a finite set of edges (or transitions). We will write \(\vout(s)\) for the set of children of \(s\), and \(\vin(s)\) for the set of parents of \(s\).

Let $\cT$ be a set of all finite trajectories $\tau = \left(s_0 \to s_1 \to \ldots \to s_{n_{\tau}} \to s_f\right)$ from $s_0$ to $s_f$, where we use $n_{\tau}$ to denote the length of the trajectory $\tau$. We use a standard convention that $s_{n_{\tau} + 1} = s_f$. We will say that $\tau$ stops at $s$ if its last transition is $s \to s_f$. The edges of the form $(s \to s_f)$ are called terminating transitions, and the states $s$ that have an outgoing edge into $s_f$ are called \textit{terminal states}. The set of terminal states is denoted by $\cX$, and the probability distribution of interest $\cR(x) / \cZ$ is defined on it, where $\cR(x) > 0$ is called GFlowNet reward and $\cZ = \sum_{x \in X} \cR(x)$ is an unknown normalizing constant. 

GFlowNet is essentially a pair of bidirectional policies, where a \textit{forward policy} $\PF(s' \mid s)$ is a distribution over children of each state, and a \textit{backward policy} $\PB(s \mid s')$ is a distribution over parents of each state. GFlowNet is trained in a way to find such pair of policies that the distributions over trajectories they induce coincide:
\begin{equation}
\label{eq:tb}
\textstyle 
     \cP(\tau) = \prod_{t=0}^{n_\tau} \PF \left(s_{t+1} \mid s_{t}\right) = \prod_{t=0}^{n_\tau} \PB \left(s_{t} \mid s_{t+1}\right)\,.
\end{equation}
The main goal for sampling is to have such $\cP$ that for any $x \in X$, probability that $\tau \sim \cP$ terminates in $x$ coincides with $\cR(x) / \cZ$. This property is called the \textit{reward matching condition}, and w.r.t $\PB$ it can be simply written as 
\begin{equation}
\label{eq:rm}
\PB(x \mid s_f) = \cR(x) / \cZ \;\; \forall x \in X.
\end{equation}
If both \eqref{eq:tb} and \eqref{eq:rm} are satisfied, $\PF$ can be used to sample terminal states from the reward distribution (see \citet{bengio2023gflownet} and \citet{pmlr-v267-morozov25a} for more details).


\citet{pmlr-v267-morozov25a} defines state and edge flows through the expected number of visits of a certain state
\begin{equation}
\begin{aligned}
    \label{eq:flow}
    \cF(s) &=  \cZ \cdot \;\E_{\tau \sim \cP}\left[ \sum_{t = 0}^{n_{\tau}+1} \ind\{s_t = s\}\right],  \quad \\
    \cF(s \to s') &=  \cZ \cdot \;\E_{\tau \sim \cP}\left[ \sum_{t = 0}^{n_{\tau}} \ind\{s_t = s, s_{t+1} =s'\}\right]
\end{aligned}
\end{equation}
It is clear from the definition that the flow functions satisfy the following conditions (Proposition 3.6 of \citet{pmlr-v267-morozov25a}):
\begin{equation}
    \label{eq:fm_conditions}
    \sum_{s \to s'} \cF(s \to s') = \sum_{s'' \to s} \cF(s'' \to s)
\end{equation} and $\cF(s_0) = \cF(s_f) = \cZ.$

The main problem arising when extending standard GFlowNet framework to non-acyclic is that the $n_\tau$ is unbounded, since the policy can go in cycles. $\E[n_\tau]$ exactly represents the mean trajectory length needed to obtain a sample from target distribution following actions sampled from forward policy, and it also signifies the practical efficiency of the sampler. Thus, it is natural to formulate the problem of training a non-acyclic GFlowNet that minimizes $\E[n_\tau]$. Using the above definition, one check that the expected trajectory length coincides with the normalized total flow $\E[n_\tau] = \tfrac{1}{\cZ} \sum_{s \in \cS \setminus \{s_0, s_f\}} \cF(s)$, thus \citet{pmlr-v267-morozov25a} gives a theoretical ground for learning a model with the smallest $\E[n_\tau]$ which is equivalent to learning a model with the smallest total flow.


\citet{pmlr-v267-morozov25a, morozov2026learningshortestpathsgenerative} formulates the non-acyclic GFlowNet problem as the following constraint optimization problem. Define the set of internal states $I := \cS \setminus \{s_0,s_f\}.$, then:
\begin{equation}
\label{eq:gflownet-min-flow}
\begin{aligned}
&\min_{\cF,\PF,\PB}\quad
 \sum_{s\in I} \cF(s) \\
\text{s.t.}\quad
& \cF(s)\PF(s'\mid s)=\cF(s')\PB(s\mid s') \,, \forall (s,s')\in \cE, \\
& \cF(s_f)\PB(x\mid s_f)=\cR(x),
\quad x\in X.
\end{aligned}
\end{equation}
The constraint $\cF(s)\PF(s'\mid s)=\cF(s')\PB(s\mid s')$ is referred to as the detailed balance condition. As shown in~\cite{bengio2023gflownet,malkin2022trajectory, pmlr-v267-morozov25a}, satisfying detailed balance for every adjacent pair of states is equivalent to the trajectory balance condition~\eqref{eq:tb}.

This optimization problem naturally encodes all necessary constraints needed for training GFlowNet. Equality constraints satisfy Equations \ref{eq:tb} \& \ref{eq:rm}, while the objective minimizes total flow, which is equivalent to minimizing $\E[n_\tau]$ as per \citet{pmlr-v267-morozov25a}.

The simplest approach to this problem proposed in \citet{brunswic2024theory} and further explored in \citet{pmlr-v267-morozov25a} is to add $\lambda \cF_\theta(s)$ to the utilized GFlowNet loss e.g detailed balance \cite{bengio2023gflownet} or variants of trajectory balance loss ~\cite{madan2023learning, malkin2022trajectory, morozov2026learningshortestpathsgenerative}, where $\lambda$ is a regularization coefficient.

\subsection{Kantorovich OT via Linear Programming}
While GFlowNet framework aims to sample from some target distribution $\cR(x)/\cZ$, optimal transport problem on the other hand seeks an optimal coupling of two given source and target distributions, that moves probability mass from the source distribution to the target with respect to the defined cost function \cite{villani2008optimal}.
\begin{definition}
    Let $\mu$ and $\nu$ be two probability measures on a finite discrete spaces $\cX$ and $\cY$. A coupling of $\mu$ and $\nu$ is a joint probability distribution $\Pi$ on space $\cX \times\cY$ such that:
    \begin{equation*}
        \sum_{i=1}^{n}\Pi_{i,j} = \nu_j, \quad
            \sum_{j=1}^{m} \Pi_{i, j} = \mu_i
    \end{equation*}
\end{definition}

Suppose we have two finite set of states $\{x_1, \dots x_n\}$ and $\{y_1 \dots y_m\}$ with probability distributions $\mu, \nu$. Then Kantorovich OT problem is formulated as follows:
\begin{equation}
\label{eq:ot-kantorovich}
\begin{aligned}
\textstyle
    \min\limits_{\Pi \geq 0} \quad
    & \textstyle  \sum_{i,j=1}^{n,m}d(x_i, y_j)\Pi_{i,j} \\
    \textstyle \text{s.t.}\quad
    & \textstyle \sum_{i}\Pi_{i, j} = \nu_j, \quad \sum_{j}\Pi_{i, j} = \mu_i\,.
\end{aligned}    
\end{equation}
where $d(x_i, y_j)$ is some cost of moving mass form $x_i$ to $y_j$.
Usually this linear programming problem is solved via classical methods e.g. simplex-method \cite{dantzig1951maximization}. 

\section{GFlowNets and Optimal Transport}
\label{mainsec}

Consider a non-acyclic GFlowNet environment graph $\cG = (\cS, \cE)$. We impose the following assumptions on the structure of the graph $\cG$, similarly to \citet{essid2018quadratically}:
\begin{assumption}
\label{graph_assumption}
     \begin{itemize}[noitemsep,topsep=0pt,leftmargin=2em]
         \item There is a special \textit{initial state} $s_0$ with no incoming edges and a special \textit{sink state} $s_f$ with no outgoing edges;
         \item There are sets of states $U$ and $X$ s.t. $s_0$ has outgoing edges only to $U$ and $s_f$ has incoming edges only from $X$;
         \item There exists a finite-length path between any $u \in U$ and $x \in X$; 
         \item We are given two distributions $\cL(u)$ and $\cR(x)$ on $U$ and $X$, respectively, that is,
         \[
         \sum_{u \in U} \cL(u) = \sum_{x \in X} \cR(x) = 1\,.
         \]
         We adopt the convention that both $\cL$ and $\cR$ are extended by zero outside their supports and thus defined on all internal states $I$.
     \end{itemize}
\end{assumption} 
Next, define a cost function $d(u,x)$ for pairs of $u \in U$ and $x \in X$ as 
\begin{equation}
\label{eq:d_u_x_def}
d(u,x) = |\tau_{u,x}|\,,
\end{equation}
where $\tau_{u,x}$ is a shortest path from $u$ to $x$ and $|\tau_{u,x}|$ denotes its length. Thus, one can consider a Kantorovich OT problem \eqref{eq:ot-kantorovich} between distributions $\cL$ and $\cR$ with the cost function $d(u,x)$:
\begin{equation}
\label{eq:ot-graph-problem}
\begin{aligned}
\textstyle \min\limits_{\Pi \geq 0} \quad & \textstyle  \sum_{i,j=1}^{n,m}d(u_i, x_j)\Pi_{i,j} \\ \textstyle \text{s.t.}\quad & \textstyle \sum_{i}\Pi_{i, j} = \cR(x_j)\,, \quad
\sum_{j}\Pi_{i, j} = \cL(u_i)\,.
\end{aligned}    
\end{equation}
Where $\Pi \in \mathbb{R}_{\geq 0}^{n \times m}$ is a matrix which defines some coupling between the distributions $\cL(u)$ on $U$ and $\cR(x)$ on $X$. Here $|U| = n$, and $|X| = m$. It is also worth noting that this graph optimal transport with shortest-path ground costs has a classical min-cost-flow formulation \cite{essid2018quadratically}. We next show that the minimum-flow objective arising in non-acyclic GFlowNets, once the initial edge-flow distribution is fixed, reduces exactly to this graph OT problem. 



\subsection{Minimum-flow LP formulation}

We start with the GFlowNet minimum-flow problem \eqref{eq:gflownet-min-flow} that was introduced in~\citet{pmlr-v267-morozov25a}. The detailed balance constraints in \eqref{eq:gflownet-min-flow} are bilinear in the decision variables, since
\begin{equation}
\label{eq:db_lp}
\cF(s\to s') = \cF(s)\PF(s'\mid s)=\cF(s')\PB(s\mid s')\,.
\end{equation}
The problem becomes linear after policies \(\PF\) and \(\PB\) are eliminated and replaced by independent state and edge-flow variables. We will now treat every state-flow $\cF(s)$, $s\in \cS$, and edge-flow $\cF(s\to s')$, $s\to s'\in \cE$, as an independent variable. 
\par 
In this case, the detailed balance condition \eqref{eq:db_lp} from the optimization problem \eqref{eq:gflownet-min-flow} can be substituted with the flow matching constraint \eqref{eq:fm_conditions}, allowing for an equivalent reformulation of \eqref{eq:gflownet-min-flow} as a linear program. We provide the proof of this equivalence in Appendix \ref{app:correctens_lp}.



\par 
We fix the distribution $\cL(u)$ defined on $U$ which we will call Leward (left reward). Then we fix the first-step distribution induced by the underlying GFlowNet similarly to the reward matching condition \eqref{eq:rm}:
\[
\cF(s_0 \to u) = \cL(u)\,.
\]
After adding this constraint, the extended GFlowNet linear programming problem writes as
\begin{equation}
\label{eq:extended-primal-lp}
\begin{aligned}
&\min_{\cF(s),\cF(s \to s')}\quad
\sum_{s\in I} \cF(s) \\
\text{s.t.}\quad
& \sum_{v\in\vout(s)} \cF(s \to v)=\cF(s),
\quad s\in \cS\setminus\{s_f\}, \\
& \sum_{u\in\vin(s)} \cF(u \to s)=\cF(s),
 \quad s\in \cS\setminus\{s_0\}, \\
& \cF(x \to s_f)=\cR(x),
\quad x\in X, \\
& \cF(s_0 \to u)=\cL(u),
\quad u\in U, \\
& \cF(s \to s')\ge 0,
 \quad (s,s')\in \cE.
\end{aligned}
\end{equation}

From GFlowNets theoretical construction, it is unclear whether such problems with additional equality constraints has a feasible solution. However, we further show that a solution always exists (Theorem \ref{th:otequivalence})

Extended primal problem can be reduced by omitting the flow variable \(\cF(s)\). Define
\[
E^\circ := \{s\to s'\in \cE : s\neq s_0,\ s'\neq s_f\},
\]
The reduced primal formulation is: 
\begin{equation}
\label{eq:reduced-primal-lp}
\begin{aligned}
&\min_{\cF(s \to s')\ge 0}\quad
 \sum_{s\to s'\in E^\circ} \cF(s \to s') \\
\text{s.t.}\quad
& \sum_{v:\,s\to v\in E^\circ} \cF(s \to v)\\
       &- \sum_{u:\,u\to s\in E^\circ} \cF(u \to s)=\cL(s) - \cR(s),
\qquad s\in I.
\end{aligned}
\end{equation}
Note that when reducing the initial problem, we have the relation
\begin{align*}
\sum_{s \in I} \cF(s) &= \sum_{s_0 \to u} \cF(s_0 \to u) + \sum_{s \to s' \in E^\circ} \cF(s \to s')\\ &= 1 + \sum_{s \to s' \in E^\circ} \cF(s \to s')\,.    
\end{align*}

We will omit this constant, as it does not change minimum of the problem.

Note that if one sums up flow-matching constraint for all internal states, one would have: 
\[
\sum_{x \in X} \cR(x) = \sum_{u \in U} \cL(u).
\]
This equality imposes a mass-balance constraint: the two functions must either have equal total mass or be normalized as probability distributions. We adopt the latter formulation, as it eliminates the need to handle an unknown normalizing constant. This is advantageous both in the theoretical analysis and in practical implementations, where the constant would otherwise need to be learned or estimated. Accordingly, we include this condition in Assumption~\ref{graph_assumption}.


\subsection{GFlowNet-OT equivalence}

We are now ready to establish the equivalence between the Kantorovich OT problem \eqref{eq:ot-graph-problem} and the GFlowNet problem \eqref{eq:reduced-primal-lp}. Denote by $\text{GFlow}^\star$ the optimal value of the functional \eqref{eq:reduced-primal-lp} and by $\text{OT}^\star$ the optimal value of the functional \eqref{eq:ot-graph-problem} in the Kantorovich problem. Then the following equivalence holds:

\begin{theorem}
\label{th:otequivalence}
Under Assumption \ref{graph_assumption}, the reduced primal problem \eqref{eq:reduced-primal-lp} is equivalent to the OT Kantorovich problem \eqref{eq:ot-graph-problem} between $\cL$ on $U$ and $\cR$ on $X$, that is,  
\begin{equation}
\label{eq:ot-graph-problem-value}
\text{GFlow}^{\star} = \text{OT}^{\star}\,.
\end{equation}
Moreover, if $\cP^{\star}$ is the trajectory distribution induced by the solution of $\text{GFlow}^\star$, it induces an optimal coupling $\Pi^{\star}_{u,x} := \sum_{\tau:u\leadsto x} \cP^{\star}(\tau)\,,  u \in U\,, x \in X$ in \eqref{eq:ot-graph-problem}.
\end{theorem}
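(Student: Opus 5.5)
We prove the two inequalities $\text{GFlow}^\star \le \text{OT}^\star$ and $\text{OT}^\star \le \text{GFlow}^\star$ separately, and then read off the coupling statement from the equality case. The approach is the classical one from min-cost flow: superpose shortest paths in one direction, and use flow decomposition in the other.

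\emph{Step 1 ($\text{GFlow}^\star \le \text{OT}^\star$, which also gives feasibility).} Take an optimal Kantorovich plan $\Pi^\star$ for \eqref{eq:ot-graph-problem}. It exists because the feasible set is a nonempty compact polytope; for instance $\cL\otimes\cR$ is feasible. For each pair $(u,x)$ fix a shortest path $\tau_{u,x}$, which exists by Assumption~\ref{graph_assumption}. Define
\[
\cF(s\to s') = \sum_{u,x}\Pi^\star_{u,x}\,\#\{\text{times } \tau_{u,x} \text{ uses } s\to s'\}, \qquad s\to s'\in E^\circ .
\]
Each path from $u$ to $x$ contributes net outflow $+1$ at $u$, $-1$ at $x$, and $0$ at every intermediate vertex. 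Summing with weights $\Pi^\star_{u,x}$ and using the marginal constraints, the net outflow at $s$ is $\cL(s)-\cR(s)$, so this $\cF$ is feasible for \eqref{eq:reduced-primal-lp}. Its objective value is $\sum_{u,x}\Pi^\star_{u,x}|\tau_{u,x}| = \text{OT}^\star$. Adding back $\cF(s_0\to u)=\cL(u)$, $\cF(x\to s_f)=\cR(x)$ and state flows gives a feasible point of \eqref{eq:extended-primal-lp}; this settles the existence question raised above.

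\emph{Step 2 ($\text{OT}^\star \le \text{GFlow}^\star$).} This is the main step. Take any feasible $\cF$ for \eqref{eq:reduced-primal-lp}; an optimum exists since the objective is bounded below by $0$ and the LP is feasible. Apply the flow decomposition theorem to the nonnegative edge flow on $E^\circ$ with supplies $\cL-\cR$. It writes $\cF$ as a sum of weighted simple paths $\rho$, each from a supply node $u\in U$ to a demand node $x\in X$, plus nonnegative weighted cycles. The path weights $w_\rho$ satisfy
\[
\sum_{\rho:u\to x} w_\rho = \cL(u)\ \text{summed over } x,\qquad \sum_{\rho:u\to x} w_\rho = \cR(x)\ \text{summed over } u .
\]

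One subtlety is a node with both $\cL(s)>0$ and $\cR(s)>0$. I would handle it by working in the extended graph with $s_0,s_f$, where supply and demand live on distinct edges, and decomposing $s_0\to s_f$ paths there. A path $s_0\to s$ with immediate termination then simply contributes cost $0=d(s,s)$.

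Setting $\Pi_{u,x}=\sum_{\rho:u\to x}w_\rho$ gives a coupling. Since every path from $u$ to $x$ has length at least $d(u,x)$ and cycles have nonnegative cost,
\[
\sum_{e\in E^\circ}\cF(e) \;\ge\; \sum_\rho w_\rho|\rho| \;\ge\; \sum_{u,x} d(u,x)\,\Pi_{u,x} \;\ge\; \text{OT}^\star .
\]
Together with Step 1 this gives $\text{GFlow}^\star=\text{OT}^\star$.

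\emph{Step 3 (optimal coupling).} Let $\cF^\star$ be optimal and let $\cP^\star$ be the Markov chain with $\PF(s'\mid s)=\cF^\star(s\to s')/\cF^\star(s)$. At equality every inequality in the chain of Step 2 is tight, so $\cF^\star$ carries no cycle mass. Hence the chain terminates almost surely, and the expected visit counts of $\cP^\star$ reproduce the edge flows $\cF^\star$ (the correspondence of \eqref{eq:flow} with flows, as in \citet{pmlr-v267-morozov25a}).

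It remains to show that $\Pi^\star_{u,x}=\sum_{\tau:u\leadsto x}\cP^\star(\tau)$ is an optimal coupling. Its marginals are $\cL$, fixed by the first step $\cF(s_0\to u)=\cL(u)$, and $\cR$, by the terminating flows. For its cost, use $\E[n_\tau]$ as the total flow: $\sum_\tau \cP^\star(\tau)|\tau|$ equals the optimal objective. Since $|\tau|\ge d(u,x)$ for each trajectory, we get $\sum_{u,x} d\,\Pi^\star \le \text{GFlow}^\star=\text{OT}^\star$. Then $\Pi^\star$ is feasible with cost at most the optimum, hence optimal, and moreover almost every sampled trajectory is a shortest path.

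The delicate point I expect is the trajectory-level argument in this last step, in particular justifying that $\cP^\star$ terminates and that path lengths match flows. If needed, I would instead argue directly via the decomposition of $\cF^\star$ into $s_0\to s_f$ paths, which coincides in law with the Markov chain when there are no cycles.
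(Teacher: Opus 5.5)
Your proof is correct, and Step 1 is exactly the paper's construction. You diverge on the reverse inequality. The paper never decomposes a flow. It takes an optimal $\cF^\star$, normalizes it into $\PF^\star$, and asserts via \eqref{eq:flow} that $\cF^\star$ equals the expected edge-visit counts of the induced trajectory distribution $\cP^\star$. The single estimate $\text{GFlow}^\star=\sum_\tau|\tau|\,\cP^\star(\tau)\ge\sum_{u,x}d(u,x)\Pi_{u,x}\ge\text{OT}^\star$ then gives both the value identity and the optimality of the coupling read off from $\cP^\star$. Your flow-decomposition argument in the extended graph instead bounds the cost of an arbitrary feasible flow using purely combinatorial input. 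It separates the value identity from the coupling statement. Routing supplies and demands through $s_0\to u$ and $x\to s_f$ is the right fix for states with $\cL(s),\cR(s)>0$; decomposing with net supply $\cL-\cR$ would produce the wrong marginals.

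The payoff of your route is in Step 3. Equation \eqref{eq:flow} only goes from trajectory distributions to flows. A feasible LP flow can carry a circulation that is unreachable from $s_0$ and invisible to every trajectory, so the identification the paper relies on is not automatic, and the paper does not justify it. Your tightness observation (zero cycle weight in $\cF^\star$) is what licenses that identification. Once $\cF^\star$ is a nonnegative combination of $s_0\to s_f$ paths, every state of positive flow reaches $s_f$ along positive-flow edges. Hence the chain is absorbed almost surely, and its visit counts are the unique solution of the balance equations that $\cF^\star$ also satisfies. Equivalently, any directed cycle of positive-flow edges could be cancelled at strictly lower cost, so $\cF^\star$ has acyclic support.

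Drop your fallback, however. Path decompositions are not unique and need not agree in law with the chain. Take $U=\{a,b\}$, $X=\{d,e\}$, uniform $\cL,\cR$, and edges $a\to c$, $b\to c$, $c\to d$, $c\to e$ (plus $s_0\to a,b$ and $d,e\to s_f$). The decomposition $a\to c\to d$, $b\to c\to e$ gives the coupling $\Pi_{a,d}=\Pi_{b,e}=1/2$, while the chain gives the product coupling. Both are optimal, but the theorem is about the chain's.
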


\begin{proof}
    1. Suppose that we are given an optimal coupling $\Pi_{u,x}^\star$ which is the solution of the minimization problem \eqref{eq:ot-graph-problem}.
    Existence of $\Pi_{u,x}^\star$ is guaranteed by Theorem 4.1 from \citep[p.~43]{villani2008optimal}. We construct a feasible flow in a sense of \eqref{eq:reduced-primal-lp} as follows. For every $u \in U$ and $x \in X$, we choose an arbitrary shortest path $\tau_{u,x}$ from $u$ to $x$. Then we set
    \[\cF(s \to s') = \sum_{u\in U}\sum_{x\in X} \Pi_{u,x}^\star \cdot \mathbb{I}[s \to s' \in \tau_{u,x}]\]
    Let us check the feasibility of this flow. Define for $s \in I$, $u \in U$, and $x \in X$, the auxiliary function
    \[
    \Delta^{u,x}_{s} = \sum_{v:s\to v \in E} \mathbb{I}[s\to v \in \tau_{u,x}] - \sum_{v: v\to s\in E} \mathbb{I}[v \to s \in \tau_{u,x}]\,.
    \]

    Since $\tau_{u,x}$ is a directed path, it has only outgoing edge at its start, only incoming edge at its end, and equal incoming/outgoing counts at every intermediate node. So we can equivalently write 
    \[
    \Delta^{u,x}_{s} = \mathbb{I}[s = u] - \mathbb{I}[s = x]
    \]

Let $\Delta\cF(s)$ denote the left-hand side of the equality constraint in \eqref{eq:reduced-primal-lp}. Using
$\Delta_s^{u,x}=\ind[s=u]-\ind[s=x]$ and the marginal constraints
of $\Pi^\star$, we have
{
\begin{equation}
\begin{aligned}
\Delta\cF(s)
&= \sum_{u\in U,\,x\in X}\!\Pi_{u,x}^\star
   \bigl(\ind[s=u]-\ind[s=x]\bigr) \\
&= \sum_{u\in U}\ind[s=u]\cL(u)
   - \sum_{x\in X}\ind[s=x]\cR(x)\\
 &= \cL(s)-\cR(s).
\end{aligned}
\end{equation}}
The above equations implies the equality constraint from \eqref{eq:reduced-primal-lp}, therefore we have checked that the flow we defined is feasible, thus the solution set is non-empty. Now let us compute the value of \eqref{eq:reduced-primal-lp}:
\begin{equation}
    \begin{aligned}
            \sum_{s \to s' \in E^\circ}&\cF(s \to s') = \sum_{\substack{s \to s' \in E^\circ\\ u\in U,\; x\in X}} \Pi_{u,x}^\star \mathbb{I}[s \to s' \in \tau_{u,x}] \\
            &= \sum_{u\in U, x\in X}\Pi_{u,x}^\star  \sum_{s\to s'\in E^\circ}\mathbb{I}[s \to s'  \in \tau_{u,x}] \\
            &= \sum_{u\in U, x \in X} \Pi_{u,x}^\star |\tau_{u,x}|\,,
    \end{aligned}
\end{equation}

where $|\tau_{u,x}|$ is the length of the path from $u$ to $x$. Since $\tau_{u,x}$ is a shortest path, $|\tau_{u,x}| = d(u,x)$, and 
\begin{equation}
   \sum_{s \to s' \in E^\circ} \mathcal{F}(s \to s')= \sum_{u,x} d(u,x)\Pi_{u,x}^{\star}.
\end{equation}
The flow we constructed is feasible, but might not be optimal. Hence, we obtain the inequality
    \begin{equation}
    \label{eq:OT_star_geq} 
    \text{GFlow}^\star \leq \text{OT}^\star\,.
    \end{equation}
    2. Note that the optimal solution exists under Assumptions \ref{graph_assumption} and by the fact that our problem is bounded below. Take this optimal solution to the problem \eqref{eq:reduced-primal-lp}, and denote it by $\cF^{\star}(s \to s')$. This flow follows equivalence establised in \eqref{app:correctens_lp}, this means that we now may introduce forward policy: 
    \[
    \PF^\star(s' | s) = \cF^{\star}(s \to s') / \sum_{s\to s''} \cF^{\star}(s \to s'')
    \]
    and the respective probability over trajectories $\tau=(v_0 = s_0, v_1 \dots v_{n_\tau -1},v_{n_\tau} = s_f)$: 
    \[
    \cP^\star(\tau) = \prod_{i = 0}^{n_\tau} \PF^{\star}(v_{i+1} | v_i)\,.
    \]
    Then the definition \eqref{eq:flow} from \citet{pmlr-v267-morozov25a} implies that 
    \[
    \cF^{\star}(s \to s')= \cZ \sum_{\tau \in \cT} \cP^\star(\tau) \sum_{t = 0}^{n_{\tau}} \ind[s_t = s, s_{t+1} =s']\,.
    \]
    Assumption \ref{graph_assumption} implies that $\cZ = 1$,  so we omit it in further derivations. Now define the joint distribution $\Pi$ on $U \times X$ by 
    \[
    \Pi_{u,x} := \sum_{\tau:u\leadsto x} \cP^{\star}(\tau)\,, \quad u \in U\,, \quad x \in X\,.
    \]
    It is clear that 
    \begin{align*}
        \sum_{x \in X}\Pi_{u,x} &= \sum_{x\to s_f}\sum_{\tau:u\leadsto x} \cP^{\star}(\tau) \\
        &= \sum_{\tau \ni s_0 \to u} \cP^{\star}(\tau)= \PF^{\star}(u | s_0)= \cL(u)\,,
    \end{align*}
    and, similarly, 
    \[
    \sum_{u \in U}\Pi_{u,x} = \cR(x)\,.
    \]
    Thus, the joint distribution $\Pi$ defined above is a coupling of $\cR(x)$ and $\cL(u)$. Let us write down the value of primal problem using our flow:
    \begin{equation}
        \begin{aligned}
            \text{GFlow}^\star &= \sum_{s \to s' \in E^\circ} \cF^{\star}(s \to s')\\ &=\sum_{s \to s'\in E^\circ} \sum_{\tau} \cP^{\star}(\tau)  \sum_{t = 0}^{n_{\tau}} \ind[s_t = s, s_{t+1} =s']\\
            &=  \sum_{\tau} \cP^{\star}(\tau) \sum_{s \to s'\in E^\circ} \sum_{t = 0}^{n_{\tau}} \ind[s_t = s, s_{t+1} =s']\\ &= \sum_{\tau} |\tau|\cP^{\star}(\tau)\,.
        \end{aligned}        
    \end{equation}
    Recall that $|\tau_{u,x}| \geq  d(u,x)$, since $d(u, x)$ is the length of the shortest path. Thus:
    \begin{equation}
        \begin{aligned}
            \sum_{\tau} |\tau| \cP^{\star}(\tau)
            &=\sum_{s_0\to u}\sum_{x \to s_f}\sum_{\tau: u\leadsto x} |\tau|\cP^{\star}(\tau)\\ &\ge \sum_{s_0\to u}\sum_{x \to s_f}\sum_{\tau: u\leadsto x} d(u, x)\cP^{\star}(\tau)\\
            &= \sum_{s_0\to u}\sum_{x \to s_f} d(u, x)\sum_{\tau: u\leadsto x}\cP^{\star}(\tau) \\ &= \sum_{u \in U}\sum_{x \in X} d(u, x)\Pi_{u, x}\,.
        \end{aligned}
    \end{equation}
    Hence, we constructed a coupling $\Pi$ such that 
    \[
    \text{GFlow}^\star \ge   \sum_{u \in U}\sum_{x \in X} d(u, x)\Pi_{u, x}\,.
    \]
    For the optimal coupling $\Pi^\star$, therefore, it holds that 
    \begin{equation}
    \label{eq:OT_star_leq}
    \text{GFlow}^\star \geq \text{OT}^{\star}\,.
    \end{equation}
    Combining \eqref{eq:OT_star_geq} and \eqref{eq:OT_star_leq}, we obtain that the GFlow optima is the same as the OT optima, therefore \eqref{eq:ot-graph-problem-value} is proved.
\end{proof}

This theorem give us characterization of a GFlowNet as an implicit way to solve Kantorovich optimal transport, as a corollary we obtain dual equivalence, this essentially follows from the strict duality of linear programming optimization problems. Detailed derivation of the dual to GFlowNet problem in terms of OT functional can be found in Appendix \ref{app:dualgflowot}. 

Furthermore, under appropriate structural assumptions, the optimal transport problem admits a reduction to a shortest-path problem. We show that, in this regime, non-acyclic GFlowNets implicitly recover such shortest-path solutions. The result is stated formally below. 

\begin{theorem}
\label{th:shortestpath}
Consider the dual to the primal problem \eqref{eq:extended-primal-lp} without first-step condition.
\begin{align*}
\max_{\pi} \quad & \sum_{x\in X} \cR(x)\pi_x \\
\text{s.t.} \quad
& \pi_{s_0}=0, \\
& \pi_{s'}-\pi_s \le 1,
\quad \forall (s \to s') \text{ with } s' \neq s_f .
\end{align*}
Then: 1. every feasible solution \(\pi\) satisfies: $\pi_s \le d(s) \quad \forall s\neq s_f$; 2. the function \(d(\cdot) = |\tau_{s_0, \cdot}|\) is a feasible dual solution, and it is dual-optimal; if \(\cR(x)>0\) for every \(x\in X\), then every optimal solution \(\pi^\star\) satisfies: $\pi_x^\star = d(x) \quad \forall x\in X.$

Moreover, complementary slackness conditions in the optimum gives us the following relation:
\[\cF^{\star}(s \to s') (\pi_{s'}^\star - (1 + \pi_s^\star)) = 0\]
which is also a primal-dual gap.

\end{theorem}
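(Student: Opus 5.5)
The proof would follow the classical LP duality argument for shortest paths, which is Bellman--Ford-style induction along paths. We work with the primal \eqref{eq:extended-primal-lp} with the first-step constraint dropped, and in its reduced form. Here the variables are edge flows, there is a unit cost on every edge not entering $s_f$ (edges out of $s_0$ included, matching $\sum_{s\in I}\cF(s)$), flow conservation holds at every internal state, and $\cF(x\to s_f)=\cR(x)$. Dualizing the conservation constraints with potentials $\pi_s$, fixing the gauge by $\pi_{s_0}=0$, and absorbing the terminating-edge potentials gives exactly the stated dual. Each non-terminating edge $s\to s'$ yields $\pi_{s'}-\pi_s\le 1$, and the objective is $\sum_x \cR(x)\pi_x$. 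I would verify this derivation briefly, since it fixes the sign conventions used in the complementary-slackness statement.

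\textbf{Step 1 (upper bound).} Fix $s\neq s_f$ reachable from $s_0$ and a shortest path $s_0=v_0\to v_1\to\dots\to v_k=s$ with $k=d(s)$. None of its edges enters $s_f$. Telescoping the constraints along the path gives
\[
\pi_s=\pi_{s_0}+\sum_{i<k}(\pi_{v_{i+1}}-\pi_{v_i})\le k=d(s).
\]
States unreachable from $s_0$ have $d(s)=+\infty$, so the bound is trivial there. Under Assumption~\ref{graph_assumption}, every $x\in X$ is reachable via some $u\in U$, so $d(x)<\infty$.

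\textbf{Step 2 (feasibility and optimality of $d$).} Take $\pi=d$, where $d(s_0)=0$. For each edge $s\to s'$ with $s'\ne s_f$ and $s$ reachable, the triangle inequality $d(s')\le d(s)+1$ gives feasibility. The main obstacle is states unreachable from $s_0$. For these I would either restrict the graph to the reachable part, since unreachable states carry zero flow in any feasible primal, or replace $\infty$ by a large finite cap $M$, which keeps feasibility because edges from reachable states into unreachable states do not exist. Optimality of $d$ then follows because Step 1 gives $\sum_x\cR(x)\pi_x\le\sum_x\cR(x)d(x)$ for every feasible $\pi$. This can be cross-checked against the primal: routing mass $\cR(x)$ along a shortest $s_0\leadsto x$ path is feasible with cost $\sum_x \cR(x) d(x)$, so strong duality holds with equality. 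For uniqueness on $X$, suppose $\pi^\star$ is optimal. Then $\sum_x\cR(x)(d(x)-\pi^\star_x)=0$, and each term is nonnegative by Step 1. Since $\cR(x)>0$ for every $x$, this forces $\pi^\star_x=d(x)$.

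\textbf{Step 3 (complementary slackness).} By LP strong duality, which applies because the primal is feasible and bounded below, optimal $\cF^\star$ and $\pi^\star$ satisfy complementary slackness. Each primal variable $\cF(s\to s')\ge 0$ is paired with the dual slack $1+\pi_s-\pi_{s'}\ge 0$, which gives
\[
\cF^\star(s\to s')\bigl(\pi^\star_{s'}-(1+\pi^\star_s)\bigr)=0 .
\]
Summing over edges and using the conservation constraints, the total sum $\sum_{s\to s'}\cF(s\to s')(1+\pi_s-\pi_{s'})$ equals primal value minus dual value. This identifies the expression as the primal--dual gap. Thus positive flow travels only on tight edges, that is, along shortest paths.
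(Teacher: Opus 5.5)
Your proof is correct and, for parts 1--3, is essentially the paper's own argument: telescoping $\pi_{s'}-\pi_s\le 1$ along a path from $s_0$, feasibility of $d$ from $d(s')\le d(s)+1$, optimality by bound-then-attain, and strict positivity of $\cR$ forcing $\pi^\star_x=d(x)$ on $X$. The paper tacitly assumes every state is reachable from $s_0$, so your cap-$M$ device is a useful tidy-up; prefer it to the restriction remark, since a feasible primal can carry circulations among unreachable states and only optimal primals cannot. Your Step 3 identity $\sum_{s\to s',\,s'\neq s_f}\cF(s\to s')(1+\pi_s-\pi_{s'})=\text{primal value}-\text{dual value}$ goes beyond the paper: it proves the complementary-slackness and primal--dual-gap claim, which the paper states without derivation.
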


The complementary slackness conditions provide a characterization of the optimal flow: it can be non-zero only on the tight subgraph, where $(\pi_{s'}^\star - (1 + \pi_s^\star)) = 0.$ Since $\pi_x^\star$ coincides with the shortest-path metric, it follows that the flow can be supported only on a subgraph of shortest paths. This recovers the corresponding claim of~\citet{morozov2026learningshortestpathsgenerative}. Detailed proof of Theorem \ref{th:shortestpath} can be found in Appendix \ref{app:proof_shortestpath}


\begin{table*}[t]
\begin{center}
    
\begin{footnotesize}
\caption{Comparison of the method on the Hypergrid environment. $\widehat{\mathrm{TV}}$ is the empirical total variation distance between samples from the model and the true distribution, $\mathrm{TV}^{\star}$ is the total variation obtained by a perfect sampler, $\mathbb{E}|\tau|$ is the average trajectory length of the learned sampler, and $\mathrm{OT}^{\star}$ is the optimal OT cost obtained via the POT solver~\cite{flamary2021pot}.}
\label{tab:hypergrid}

\scriptsize
\setlength{\tabcolsep}{2pt}
\renewcommand{\arraystretch}{0.92}

\resizebox{0.67\textwidth}{!}{%
\begin{tabular}{@{}l|cccc|cccc@{}}
\toprule
&
\multicolumn{4}{c|}{$\mathcal{L}=\mathrm{Ball}$}
&
\multicolumn{4}{c}{$\mathcal{L}=\mathrm{Moon}$}
\\
\cmidrule(lr){2-5}
\cmidrule(lr){6-9}
$H$
&
$\widehat{\mathrm{TV}} \downarrow$
&
$\mathrm{TV}^{\star} \downarrow$
&
$\mathbb{E}|\tau|$
&
$\mathrm{OT}^{\star}$
&
$\widehat{\mathrm{TV}} \downarrow$
&
$\mathrm{TV}^{\star} \downarrow$
&
$\mathbb{E}|\tau|$
&
$\mathrm{OT}^{\star}$
\\
\midrule

$10$
& $0.024$ \scalebox{0.7}{$\!\pm\!0.0004$}
& $0.024$
& $3.990$ \scalebox{0.7}{$\!\pm\!0.015$}
& $3.997$
& $0.022$ \scalebox{0.7}{$\!\pm\!0.018$}
& $0.024$
& $4.352$ \scalebox{0.7}{$\!\pm\!0.015$}
& $4.351$
\\

$15$
& $0.036$ \scalebox{0.7}{$\!\pm\!0.0009$}
& $0.033$
& $6.325$ \scalebox{0.7}{$\!\pm\!0.011$}
& $6.303$
& $0.023$ \scalebox{0.7}{$\!\pm\!0.006$}
& $0.033$
& $6.907$ \scalebox{0.7}{$\!\pm\!0.010$}
& $6.868$
\\

$20$
& $0.037$ \scalebox{0.7}{$\!\pm\!0.0006$}
& $0.040$
& $8.326$ \scalebox{0.7}{$\!\pm\!0.021$}
& $8.325$
& $0.032$ \scalebox{0.7}{$\!\pm\!0.008$}
& $0.040$
& $9.001$ \scalebox{0.7}{$\!\pm\!0.112$}
& $9.059$
\\

\bottomrule
\end{tabular}%
}
\end{footnotesize}
\end{center}
\end{table*}

{\renewcommand{\arraystretch}{1.0}
\setlength{\tabcolsep}{3pt}
\begin{table*}[t]
\caption{Results obtained on permutations environment with ablation of $\lambda$. Reference $\text{OT}^\star$ cost obtained via the POT solver~\cite{flamary2021pot} for $n=4$ is $\textcolor{blue}{0.567}$ and is $n=8$ is $\textcolor{red}{1.008}$, for larger permutations $\text{OT}^\star$ is intractable. $C(k)L^1$ is the empirical $L_1$ error between fixed points probabilities obtained from model and the true ones~\cite{pmlr-v267-morozov25a} $\E|\tau|$ is the expected trajectory length. All results are averaged over 3 seeds.}
\label{tab:perms}
\centering
\begin{center}
\begin{footnotesize}
\begin{tabular}{@{}l|cc|cc|cc@{}}
    \toprule
      &
      \multicolumn{2}{c}{$n=4$} & \multicolumn{2}{c}{$n=8$} & \multicolumn{2}{c}{$n=20$}    \\
     \cmidrule(l){2-7}  
     $\lambda$
               & $C(k) \; L^1 \downarrow$& $\E|\tau|$ 
               & $C(k) \; L^1 \downarrow$  & $\E|\tau|$ & $C(k) \; L^1 \downarrow$& $\E|\tau|$ \\
    \midrule
    $\lambda=10^{-1}$
    & $0.012$ \scalebox{0.7}{$\!\pm\!0.001$}
    & $0.445$ \scalebox{0.7}{$\!\pm\!0.002$}
    & $0.011$ \scalebox{0.7}{$\!\pm\!0.005$}
    & $0.645$ \scalebox{0.7}{$\!\pm\!0.013$}
    & $0.016$ \scalebox{0.7}{$\!\pm\!0.000$}
    & $2.313$ \scalebox{0.7}{$\!\pm\!0.018$}\\
    $\lambda=10^{-2}$
    & $0.002$ \scalebox{0.7}{$\!\pm\!0.000$}
    & $\textcolor{blue}{0.557}$ \scalebox{0.7}{$\!\pm\!0.001$}
    & $0.001$ \scalebox{0.7}{$\!\pm\!0.002$}
    & $\textcolor{red}{1.010}$ \scalebox{0.7}{$\!\pm\!0.011$}
    & $0.002$ \scalebox{0.7}{$\!\pm\!0.000$}
    & $4.436$ \scalebox{0.7}{$\!\pm\!0.014$}\\
    \bottomrule
    \end{tabular}
\end{footnotesize}
\end{center}
\label{perms_table}
\end{table*}

\subsection{Learning-based algorithm}
\label{sec:alg}
To approximate the solution to~\eqref{eq:reduced-primal-lp} with neural networks, we utilize non-acylic GFlowNet training methodology. While one can apply various existing GFlowNet losses with flow regularization to train the model~\cite{brunswic2024theory, morozov2024improving}, \citet{morozov2026learningshortestpathsgenerative} recently showed the effectiveness of trajectory balance (TB) objective \cite{malkin2022trajectory} for training non-acyclic GFlowNets in pathfinding problems, which we also found the most suitable in our setup. 

For each state in the graph, the model predicts transition probabilities for forward and backward policies $\PF$ and $\PB$. During training, we sample a batch of trajectories using the forward policy $\PF$, and compute the regularized TB objective:
\begin{equation}
\label{ourloss}
    \begin{aligned}
        \cL_{\mathtt{TB}}(\theta, \tau) &= \bigg( \log \frac{ \cL(s_1) \prod_{t=1}^{n_\tau}  \PF(s_{t+1} | s_{t}, \theta)}{\cR(s_{n_\tau})\prod_{t=0}^{n_\tau - 1}  \PB(s_{t} | s_{t + 1}, \theta)} \bigg)^2\, \\ &+ \lambda \frac{\cR(s_{n_\tau})}{\PF(s_f | s_{n_\tau}, \theta)}.
    \end{aligned}
\end{equation}
The difference from the usual TB objective is the appearance of leward $\cL$ due to the additional conditions on the initial distribution in~\eqref{eq:reduced-primal-lp}. In terms of the policies, reward matching can be written as $\PB(s \mid s_f) = \cR(s)$, and leward matching as $\PF(s \mid s_0) = \cL(s)$, which explains the form of the loss in~\eqref{ourloss}. The regularizer $\lambda {\cR(s)} / {\PF(s_f | s)}$ corresponds to the state flow $\cF(s)$ in terminal states due to the reward matching conditions in \eqref{eq:gflownet-min-flow}. \citet{morozov2026learningshortestpathsgenerative} also showed that if each internal state in $\cG$ is terminal, i.e. $\cX = I$, the objective~\eqref{ourloss} can be computed for every prefix of the sampled trajectory, resulting in a more sample-efficient approach. We utilize this option in our experiments.





\section{Experiments}
\label{allexp}


In this section, we experimentally evaluate the proposed GFlowNet-based method for solving graph OT tasks. We consider the tasks based on the environments studied in~\citet{brunswic2024theory, pmlr-v267-morozov25a}. On a smaller hypergrid task, we empirically demonstrate that the exact solution to the GFlowNet LP formulation \eqref{eq:reduced-primal-lp} is equivalent to Kantorovich OT formulation \eqref{eq:ot-graph-problem}, and can be obtained both by the exact LP solver and learning-based neural network approximation described in Section~\ref{sec:alg}. We then validate our learning-based method on larger-scale permutation environments. In particular, we recover the optimal OT solution for permutations of length $n=4$ and $n=8$, and demonstrate convergence for $n=20$. Finally, we ablate the effect of the state-flow regularization parameter $\lambda$, illustrating its impact on the optima reached by the method.



\subsection{Hypergrids}
\label{exp:hypergrids}

We evaluate our method on OT tasks on non-acyclic hypergrid environments proposed in~\citet{brunswic2024theory}. Since these environments remain small enough for exact LP optimization, they allow direct comparison between the learned sampler and the optimal OT solution. For additional details see Appendix~\ref{app:grid_app}.


States are lattice points in $\{0,\ldots,H-1\}^D$ augmented with source and terminal states $s_0,s_f$. Sampling starts from $U$ with distribution $\cL$ and transitions modify a single coordinate by $\pm1$ while remaining inside the grid. Every state also admits a terminating transition to $s_f$. We assume that leward $\cL$ admits tractable sampling, and focus on measuring two quantities: 1) what is the average transport cost of the trained model and 2) how close is the distribution of terminal states sampled from the model to the reward distribution $\cR$.

We consider multimodal rewards with modes near the grid corners~\cite{bengio2021flow} and use moon-shaped and ball-shaped leward distributions. Sampling quality is measured via the total variation distance between the reward distribution and the empirical distribution of $2\cdot10^5$ model samples. Table~\ref{tab:hypergrid} reports the resulting $\widehat{\mathrm{TV}}$, compares it with the perfect-sampler reference $\mathrm{TV}^\star$ (it is non-zero because and empirical distribution is considered), and evaluates convergence to the OT optimum $\mathrm{OT}^\star$ using the mean trajectory length. One can clearly see that our learned policy recovers optimal $\text{OT}^{\star}$ without biasing the sampling across all selected leward distributions and all hypergrid sides. All results are averaged over three seeds.

Figure~\ref{fig:exact_solutions} additionally visualizes the OT solution for $H = 10$ and $\cL = \text{Moon}$, as well as the exact solution of the equivalent GFlowNet LP formulation~\eqref{eq:reduced-primal-lp} obtained via scipy.linprog solver \cite{2020SciPy-NMeth}.

\subsection{Permutations}
Next, we consider the environment induced by the Cayley graph of the symmetric group $\mathrm{S}_n$, the group of permutations of $n$ elements ${1,2,\dots,n}$, proposed in~\citet{pmlr-v267-morozov25a}. Each internal state $s \in \cS \setminus {s_0,s_f}$ corresponds to a permutation of fixed length, $(s(1), \dots, s(n))$, Transitions are defined by adjacent transpositions: from a given state, one may swap two neighboring entries $s(k)$ and $s(k+1)$.  We take $\cL$ to be the uniform distribution over all permutations, and use the reward distribution as $\cR(s) = \exp\left(\frac{1}{2}\sum_{k=1}^n\mathbb{I}\{s(k) = k\}\right)/\cZ$ (\citet{pmlr-v267-morozov25a} derived closed form expression for its normalizing constant). We adopt the diagnostic protocol of~\citet{pmlr-v267-morozov25a}; a detailed description of the corresponding metric is provided in Appendix~\ref{app:perms_metrics}. Table~\ref{tab:perms} reports results on permutation environments, measuring both the reward sampling quality and transport cost similarly to Section~\ref{exp:hypergrids}. All results are averaged over three random seeds.

For small permutation sizes, $n=4$ and $n=8$, our method recovers solutions very close to optimal, achieving an average trajectory length almost equal to $\mathrm{OT}^\star$. For larger permutations, such as $n=20$, computing the exact optimal OT solution is no longer tractable; nevertheless, our method produces a reasonable approximation. We further ablate the effect of the state-flow regularization coefficient $\lambda$. Consistently with~\citet{pmlr-v267-morozov25a}, we observe that $\lambda$ controls a trade-off between the trajectory length of the learned sampler and sampling optimality. The larger $\lambda$ brings bias to the sampling but reduces the trajectory length, on the contrary, the lower value induces accurate sampling but larger trajectories.

\vspace{-0.15cm}
\section{Conclusion}
\label{sec:conclusion}
\vspace{-0.1cm}
We extended the theoretical framework of non-acyclic GFlowNets by establishing an equivalence with optimal transport on graphs. Unlike standard OT approaches that only learn a coupling, our framework learns a stochastic policy that transports samples through feasible local moves with near-optimal expected path cost, thereby addressing both where and how transport occurs.
We further provided a novel interpretation of the shortest-path formulation of GFlowNets from~\citet{morozov2026learningshortestpathsgenerative}. The equivalence established in Theorem~\ref{th:otequivalence} opens several promising directions, including backward policy optimization algorithms~\cite{jang2024pessimistic, gritsaev2024optimizing} and reinforcement learning techniques motivated by the connection of GFlowNets to entropy-regularized RL~\cite{tiapkin2024generative, deleu2024discrete, mohammadpour2024maximum, lau2024qgfn, morozov2024improving, he2025random}. Future work could also address practical challenges highlighted by our experimental evaluation. While presenting a computationally efficient training method, non-acyclic GFlowNets include the regularization coefficient $\lambda$ that must be carefully tuned to balance OT optimality and sampling quality. In addition, existing GFlowNet methodology could be potentially applied to learn transport maps between unnormalized probability distributions.


\section*{Acknowledgements}

This research was supported in part through computational resources of HPC facilities at HSE University~\citep{kostenetskiy2021hpc}. We would like to thank FORTUNA 812 for his inspirational music that kept us sane during this research.



\bibliography{bibliography}
\bibliographystyle{icml2026}

\newpage
\appendix
\onecolumn
\newpage 

\section{Full theoretical results}
\label{app:all_proofs}
\subsection{Correctness of LP formulation}
\label{app:correctens_lp}
Consider original minimum-flow formulation \eqref{eq:gflownet-min-flow}, we will now proof that it could be indeed reformulated as linear programming problem.

\begin{equation}
\begin{aligned}
\min_{\cF,\PF,\PB}\quad
& \sum_{s\in I} \cF(s) \\
\text{s.t.}\quad
& \cF(s)\PF(s'\mid s)=\cF(s')\PB(s\mid s'),\\
&\qquad s\to s'\in \cE, \\
& \cF(s_f)\PB(x\mid s_f)=\cR(x),
&& x\in X.
\end{aligned}
\end{equation}

\begin{proposition}[Elliminating forward policy]
    Fix \(s \neq s_f\). The following are equivalent:
\begin{enumerate}
    \item There exists a probability distribution \(\PF(\cdot \mid s) \in \Delta(\text{out}(s))\) such that
    \[
    \cF(s \to v) = \cF(s) \PF(v \mid s),
    \qquad \forall v \in \text{out}(s).
    \]
    \item The variables \(\cF(s \to v)\) satisfy the linear conditions
    \[
    \cF(s \to v) \ge 0 \quad \forall v \in \text{out}(s),
    \qquad
    \sum_{v \in \text{out}(s)} \cF(s \to v) = \cF(s).
    \]
\end{enumerate}
\end{proposition}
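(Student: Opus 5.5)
The equivalence is elementary, and I will prove the two implications separately, fixing $s \neq s_f$ and working only with the edges leaving $s$. One subtlety shapes the argument: whether $\cF(s)$ may vanish. In the minimum-flow problem, states with zero flow are allowed. In that case the policy $\PF(\cdot\mid s)$ is not determined by the flows, but any distribution works. I will treat this case separately in the converse direction.

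\emph{(1) $\Rightarrow$ (2).} Suppose $\PF(\cdot\mid s)\in\Delta(\text{out}(s))$ satisfies $\cF(s\to v)=\cF(s)\PF(v\mid s)$. The state flow is nonnegative: this follows from the definition \eqref{eq:flow} as an expected visit count, and in the optimization problem it is implicitly a nonnegative variable, which I will state as a standing convention. Since $\PF(v\mid s)\ge 0$, each product $\cF(s\to v)=\cF(s)\PF(v\mid s)$ is nonnegative. Summing over $v\in\text{out}(s)$ and using $\sum_v \PF(v\mid s)=1$ gives $\sum_v \cF(s\to v)=\cF(s)$, which is the second condition.

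\emph{(2) $\Rightarrow$ (1).} Assume the edge flows are nonnegative and sum to $\cF(s)$. Split into two cases.
\begin{itemize}
\item If $\cF(s)>0$, define $\PF(v\mid s):=\cF(s\to v)/\cF(s)$. Nonnegativity of the edge flows makes these entries nonnegative, the sum condition makes them sum to one, so $\PF(\cdot\mid s)$ lies in the simplex. The identity $\cF(s)\PF(v\mid s)=\cF(s\to v)$ holds by construction.
\item If $\cF(s)=0$, then nonnegativity together with a zero sum forces $\cF(s\to v)=0$ for every $v$. Any distribution on $\text{out}(s)$, for example the uniform one, then satisfies $\cF(s)\PF(v\mid s)=0=\cF(s\to v)$. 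This uses $\text{out}(s)\neq\emptyset$, which holds for every $s\neq s_f$ under Assumption~\ref{graph_assumption}; the paper's sink-state convention gives every non-sink state an outgoing edge.
\end{itemize}

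The only potential obstacle is exactly this degenerate case, together with being explicit that $\cF(s)\ge0$ and that $\text{out}(s)$ is nonempty. I would state both as conventions before giving the two implications. The same argument, applied to incoming edges with $\PB$, yields the symmetric statement for the backward policy. Combining both statements turns detailed balance into the flow matching constraints of \eqref{eq:extended-primal-lp}.
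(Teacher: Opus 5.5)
Your proof is correct and is essentially the paper's argument: you sum $\cF(s)\PF(v\mid s)$ over $\text{out}(s)$ for (1)$\Rightarrow$(2), and for the converse you normalize by $\cF(s)$ when it is positive and take an arbitrary distribution when $\cF(s)=0$. The paper's proof uses $\cF(s)\ge 0$ and $\text{out}(s)\neq\emptyset$ silently, and you make both explicit; note that $\text{out}(s)\neq\emptyset$ does not follow from Assumption~\ref{graph_assumption}, which allows dead-end internal states, so it really must be imposed as a convention, as you propose.
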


\begin{proof}
Assume \textup{(i)}. Since \(\PF(\cdot \mid s)\) is a probability distribution,
\[
\sum_{v \in \text{out}(s)} \cF(s \to v)
=
\sum_{v \in \text{out}(s)} \cF(s) \PF(v \mid s)
=
\cF(s) \sum_{v \in \text{out}(s)} \PF(v \mid s)
=
\cF(s),
\]
and clearly \(\cF(s \to v) \ge 0\) for all \(v\).

Now assume \textup{(ii)}. If \(\cF(s) > 0\), define
\[
\PF(v \mid s) := \frac{\cF(s \to v)}{\cF(s)},
\qquad v \in \text{out}(s).
\]
Then \(\PF(v \mid s) \ge 0\) and
\[
\sum_{v \in \text{out}(s)} \PF(v \mid s) = \frac{1}{\cF(s)} \sum_{v \in \text{out}(s)} \cF(s \to v) = 1.
\]
Hence \(\PF(\cdot \mid s)\in \Delta(\text{out}(s)\), and by construction
\[
\cF(s \to v) = \cF(s) \PF(v \mid s).
\]
If \(\cF(s) = 0\), then \(\sum_{v} \cF(s \to v)=0\) and \(\cF(s \to v)\ge0\) imply \(\cF(s \to v)=0\) for all \(v\), so any choice of \(\PF(\cdot \mid s)\in \Delta(\text{out}(s))\) works.
\end{proof}

\begin{proposition}[Eliminating the backward policy]
Fix \(s' \neq s_0\). The following are equivalent:
\begin{enumerate}
    \item There exists a probability distribution \(\PB(\cdot \mid s') \in \Delta(\text{in}(s'))\) such that
    \[
    \cF(u \to s') = \cF(s') \PB(u \mid s'),
    \qquad \forall u \in \text{in}(s').
    \]
    \item The variables \(\cF(u \to s')\) satisfy the linear conditions
    \[
    \cF(u \to s') \ge 0 \quad \forall u \in \text{in}(s'),
    \qquad
    \sum_{u \in \text{in}(s')} \cF(u \to s') = \cF(s').
    \]
\end{enumerate}
\end{proposition}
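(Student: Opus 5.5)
The proof mirrors the one for the forward policy, with the roles of outgoing and incoming edges exchanged. I will prove the two implications separately.

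For \textup{(i)}$\Rightarrow$\textup{(ii)}, assume $\PB(\cdot\mid s')\in\Delta(\text{in}(s'))$ satisfies $\cF(u\to s')=\cF(s')\PB(u\mid s')$. Summing over $u\in\text{in}(s')$ and using $\sum_u \PB(u\mid s')=1$ gives $\sum_{u}\cF(u\to s')=\cF(s')$. Nonnegativity of each $\cF(u\to s')$ then follows as a product of nonnegative quantities. This uses the standing convention that state flows are nonnegative; if one prefers not to rely on it, note that $\cF(s')$ equals a sum of terms of the form $\cF(s')\PB(\cdot\mid s')$, so the two sides agree, and nonnegativity of $\cF(s')$ is part of the model, since flows are expected visit counts.

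For \textup{(ii)}$\Rightarrow$\textup{(i)}, I split into two cases according to whether $\cF(s')$ vanishes.
\begin{itemize}
\item If $\cF(s')>0$, define $\PB(u\mid s'):=\cF(u\to s')/\cF(s')$. Its entries are nonnegative and sum to one by the linear constraint, so it lies in $\Delta(\text{in}(s'))$. The identity $\cF(u\to s')=\cF(s')\PB(u\mid s')$ then holds by construction.
\item If $\cF(s')=0$, the constraints say that nonnegative numbers sum to zero, so every $\cF(u\to s')=0$. Any distribution on $\text{in}(s')$ then works, for instance the uniform one.
\end{itemize}
The only point needing care is that $\text{in}(s')$ is nonempty, so that $\Delta(\text{in}(s'))$ is nonempty in this degenerate case. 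This is where the hypothesis $s'\neq s_0$ enters: every state other than $s_0$ is assumed to have parents. For the states relevant to the problem, this follows from reachability from $s_0$ under Assumption~\ref{graph_assumption}.

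There is no real obstacle. The only subtle step is the degenerate case $\cF(s')=0$, where the backward policy is not determined by the flows and must be chosen arbitrarily, as in the forward proposition.
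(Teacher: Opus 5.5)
Your proof is correct and matches the paper's. The paper just says to transpose the forward-policy argument: summing the identity gives (i)$\Rightarrow$(ii), and for (ii)$\Rightarrow$(i) one normalizes by $\cF(s')$ when it is positive and chooses $\PB(\cdot\mid s')$ arbitrarily when $\cF(s')=0$. Your remarks on $\cF(s')\ge 0$ and $\text{in}(s')\neq\emptyset$ make explicit two assumptions the paper uses silently. Note, however, that (i)$\Rightarrow$(ii) genuinely fails when $\cF(s')<0$, so your ``if one prefers not to rely on it'' aside does not actually remove the nonnegativity assumption.
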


\begin{proof}
The proof is identical to the previous one, with incoming edges in place of outgoing edges.
\end{proof}

\begin{corollary}[Lifted LP]
\label{eq:lifted_lp_problem}
After eliminating \(\PF\) and \(\PB\), the minimum-total-flow problem becomes
\[
\begin{aligned}
\textup{(P)}\qquad
\min_{\cF(s), \cF(s \to s')}\quad & \sum_{s\in I} \cF(s)\\
\text{s.t.}\quad
& \sum_{v\in \text{out}(s)} \cF(s \to v) = \cF(s),
&& \forall s \in S\setminus\{s_f\},\\
& \sum_{u\in \text{in}(s)} \cF(u \to s) = \cF(s),
&& \forall s \in S\setminus\{s_0\},\\
& \cF(x \to s_f) = \cR(x),
&& \forall x \in X,\\
& \cF(s \to s') \ge 0,
&& \forall (s,s') \in E.
\end{aligned}
\]
This is a linear program.
\end{corollary}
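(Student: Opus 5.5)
The plan is to show that problem (P) in the Lifted LP corollary has exactly the same feasible set and objective as the original minimum-flow problem \eqref{eq:gflownet-min-flow}, once the policies $\PF,\PB$ are projected out. Both directions will come from applying the two elimination propositions state by state, plus one extra check on the reward-matching constraint. Throughout, the edge flow $\cF(s\to s')$ is treated as the new variable defined by \eqref{eq:db_lp}.

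\emph{Forward direction (original $\Rightarrow$ (P)).} Take any feasible triple $(\cF,\PF,\PB)$ of \eqref{eq:gflownet-min-flow} and set $\cF(s\to s') := \cF(s)\PF(s'\mid s)$. By detailed balance this also equals $\cF(s')\PB(s\mid s')$. The implication (i)$\Rightarrow$(ii) of the forward-elimination proposition, applied at every $s\neq s_f$, gives nonnegativity and the outgoing flow-matching equations. The same implication of the backward-elimination proposition, applied at every $s'\neq s_0$, gives the incoming equations. For $x\in X$, detailed balance on the edge $x\to s_f$ gives $\cF(x\to s_f)=\cF(s_f)\PB(x\mid s_f)=\cR(x)$. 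The objective $\sum_{s\in I}\cF(s)$ is unchanged, so every feasible point of the original problem yields a feasible point of (P) with the same value.

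\emph{Reverse direction ((P) $\Rightarrow$ original).} Take a feasible $(\cF(s),\cF(s\to s'))$ of (P). At each $s\neq s_f$, use (ii)$\Rightarrow$(i) of the forward-elimination proposition to produce $\PF(\cdot\mid s)$ with $\cF(s\to v)=\cF(s)\PF(v\mid s)$; when $\cF(s)=0$, pick any distribution. At each $s'\neq s_0$, do the same with the backward-elimination proposition to produce $\PB(\cdot\mid s')$. Since both products equal the same edge variable, $\cF(s)\PF(s'\mid s)=\cF(s\to s')=\cF(s')\PB(s\mid s')$, which is detailed balance. For reward matching, $\cF(s_f)\PB(x\mid s_f)=\cF(x\to s_f)=\cR(x)$. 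The objective is again identical.

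Since each direction maps feasible points to feasible points with equal objective, the two problems have the same optimal value, and their optimizers correspond. The only constraints in (P) are linear equalities and nonnegativity in the variables $\cF(s),\cF(s\to s')$, so (P) is an LP. The one delicate point is the zero-flow case in the reverse direction: the policy is then not unique, but the propositions already cover it, and nonuniqueness does not affect feasibility or value. A secondary point is to confirm that the constraint sets cover exactly the edges touching $s_0$ and $s_f$, i.e.\ that no out-equation is imposed at $s_f$ and no in-equation at $s_0$; this matches the index ranges in (P). I do not expect any other obstacle.
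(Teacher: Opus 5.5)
Your proposal is correct and follows the paper's route. The paper's written proof of the corollary only records that the objective is linear and the constraints affine, leaving the equivalence to the two elimination propositions, and your two-direction argument makes that equivalence explicit, including translating reward matching into $\cF(x\to s_f)=\cR(x)$ via detailed balance on the terminating edges. Like the paper, you tacitly rely on $\cF(s)\ge 0$ in the forward direction, which is needed for $\cF(s)\PF(v\mid s)\ge 0$ and is implicit in the GFlowNet notion of flow.
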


\begin{proof}
The objective is linear in \((\cF(s),\cF(s \to s'))\), and all constraints are affine.
\end{proof}

\subsection{Duality proofs}
\label{sec:duality_proofs}
For notational convenience define the cost vector \(c\) by
\[
c_s :=
\begin{cases}
1, & s\in I,\\
0, & s\in \{s_0,s_f\}.
\end{cases}
\]
Then the primal objective is
\[
c^\top \cF = \sum_{s\in I} \cF(s)\,.
\]

Introduce dual variables:
\begin{itemize}
    \item \(\alpha_s \in \) for the outgoing-flow constraints, \(s\in S\setminus\{s_f\}\);
    \item \(\beta_s \in \) for the incoming-flow constraints, \(s\in S\setminus\{s_0\}\);
    \item \(\eta_x \in \) for the terminal constraints, \(x\in X\).
\end{itemize}

We also define padded vectors \(\bar{\alpha},\bar{\beta}\) by
\[
\bar{\alpha}_s :=
\begin{cases}
\alpha_s, & s\neq s_f,\\
0, & s=s_f,
\end{cases}
\qquad
\bar{\beta}_s :=
\begin{cases}
\beta_s, & s\neq s_0,\\
0, & s=s_0.
\end{cases}
\]

\begin{proposition}
\label{app:dual_derivation}
The dual of \eqref{eq:lifted_lp_problem} is
\begin{equation}
    \begin{aligned}
\label{app:proof_dual}
\qquad
\max_{\alpha,\beta,\eta}\quad
& \sum_{x\in X} \cR(x)\eta_x\\
\text{s.t.}\quad
& c = \bar{\alpha} + \bar{\beta},\\
& \alpha_s + \beta_{s'} \ge 0,
&& \forall s \to s' \text{ with } s' \neq s_f,\\
& \eta_x \le \alpha_x + \beta_{s_f},
&& \forall x\in X.
\end{aligned}
\end{equation}

Moreover, this dual can be reduced: 

\begin{equation}
\begin{aligned}
\label{app:proof_dual_reduced}
\max_{\pi}\quad
& \sum_{x\in X} \cR(x)\pi_x\\
\text{s.t.}\quad
& \pi_{s_0}=0,\\
& \pi_{s'}-\pi_s \le 1,
&& \forall (s,s')\in E \text{ with } s' \neq s_f.
\end{aligned}    
\end{equation}

\end{proposition}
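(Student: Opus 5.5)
We derive the dual of the lifted LP (P) in Corollary~\ref{eq:lifted_lp_problem} by the standard Lagrangian recipe. Each equality constraint gets a free multiplier: $\alpha_s$ for the outflow constraint at $s\neq s_f$, $\beta_s$ for the inflow constraint at $s\neq s_0$, and $\eta_x$ for $\cF(x\to s_f)=\cR(x)$. The edge flows are sign-constrained, while the state flows $\cF(s)$ are free variables. We write each equality constraint with $\cF(s)$ on one side, e.g.\ $\cF(s)-\sum_{v}\cF(s\to v)=0$ and $\cF(s)-\sum_u \cF(u\to s)=0$. The Lagrangian is then
\[
c^\top\cF+\sum_{s}\alpha_s\Bigl(\sum_v\cF(s\to v)-\cF(s)\Bigr)+\sum_s\beta_s\Bigl(\sum_u\cF(u\to s)-\cF(s)\Bigr)+\sum_x\eta_x\bigl(\cR(x)-\cF(x\to s_f)\bigr).
\]
We collect coefficients of each primal variable. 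For the free variable $\cF(s)$ the coefficient $c_s-\bar\alpha_s-\bar\beta_s$ must vanish, which gives $c=\bar\alpha+\bar\beta$; the padding handles the missing constraints at $s_0$ and $s_f$. For an edge $s\to s'$ with $s'\neq s_f$, the coefficient is $\alpha_s+\beta_{s'}$, and nonnegativity of the edge flow requires it to be $\ge 0$. For a terminating edge $x\to s_f$, the coefficient is $\alpha_x+\beta_{s_f}-\eta_x\ge 0$. The remaining constant term is $\sum_x\cR(x)\eta_x$, which yields \eqref{app:proof_dual}. One bookkeeping point is how the sign convention for $\eta$ interacts with $\alpha,\beta$; we fix the convention above so that the stated inequalities come out exactly.

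For the reduction, we eliminate variables using $c=\bar\alpha+\bar\beta$. At $s_0$ this gives $\alpha_{s_0}=c_{s_0}=0$. At $s_f$ it gives $\beta_{s_f}=0$. For $s\in I$ it gives $\alpha_s=1-\beta_s$. Define the potential $\pi_s:=-\beta_s$ for $s\neq s_0$ and $\pi_{s_0}:=0$. Then for $s\in I$ we have $\alpha_s=1+\pi_s$, and the edge constraint $\alpha_s+\beta_{s'}\ge 0$ becomes $\pi_{s'}-\pi_s\le 1$. At $s=s_0$ the same constraint reads $\beta_{s'}\ge 0$, i.e.\ $\pi_{s'}-\pi_{s_0}\le 0$, which is slightly stronger than the stated $\le 1$. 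This needs care: either the reduced form absorbs it through the constant $1$ discarded when passing from $\sum_{s\in I}\cF(s)$ to edge flows, or the correct normalization shifts $\pi$ by one. I would track this by checking whether $c_u$ for $u\in U$ contributes the missing unit, and adjust the definition of $\pi$ (e.g.\ $\pi_s=1-\beta_s$) so that $\pi_{s_0}=0$ and all edge constraints read uniformly $\le 1$.

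It remains to handle $\eta$. Since $\cR\ge 0$ and $\eta_x$ appears only in the objective with positive weight and in the single upper bound $\eta_x\le\alpha_x+\beta_{s_f}=\alpha_x$, we may set $\eta_x$ at that bound at the optimum. We then substitute this value into the objective in terms of $\pi$ and identify the result with $\sum_x\cR(x)\pi_x$ up to an additive constant $\sum_x\cR(x)=1$, which does not affect the maximizer. This last identification, together with the potential shift at $s_0$, is the main obstacle, since the argument must be consistent with the sign conventions fixed in the first step. The remaining steps are routine.
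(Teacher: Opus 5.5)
Your derivation of the first dual is the same Lagrangian computation as the paper's and is fine. The gap is in the reduction, which you leave unresolved. The two difficulties you flag both come from choosing $\pi_s=-\beta_s$, and they are one and the same unit shift. With $\pi=-\beta$ on $I$ you have $\alpha_s=1+\pi_s$. The edges out of $s_0$ then become $\pi_u\le 0$, and the objective becomes $\sum_x\cR(x)+\sum_x\cR(x)\pi_x$. Replacing $\pi$ by $\pi+1$ on $I$ removes both discrepancies at once.

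The clean choice, which is the one the paper makes, is $\pi:=\alpha$ on all of $\cS\setminus\{s_f\}$:
\begin{enumerate}
\item $\pi_{s_0}=\alpha_{s_0}=0$ is read off directly from $c_{s_0}=0$.
\item Since $s_0$ has no incoming edges, the head $s'$ of any edge with $s'\neq s_f$ lies in $I$. So $\beta_{s'}=1-\pi_{s'}$; this is the unit supplied by $c_{s'}=1$ that you were looking for.
\item Hence $\alpha_s+\beta_{s'}\ge 0$ reads $\pi_{s'}-\pi_s\le 1$ uniformly, the case $s=s_0$ included.
\item Finally $\beta_{s_f}=0$ gives $\eta_x\le\alpha_x=\pi_x$. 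So at the optimum $\eta_x=\pi_x$, and the objective is exactly $\sum_x\cR(x)\pi_x$.
\end{enumerate}

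Your closing step identifies the objective with $\sum_x\cR(x)\pi_x$ only up to the additive constant $\sum_x\cR(x)=1$. This contradicts your own proposed fix $\pi_s=1-\beta_s$, which is exactly $\alpha_s$ on $I$; under that choice no constant appears. It is also not what should be proved: the reduced problem is an exact reformulation with the same optimal value as the first dual. Moreover, the constant-offset version would rely on $\sum_x\cR(x)=1$, which (P) does not impose. Likewise, your alternative that the discrepancy is absorbed by the constant discarded when passing to edge flows does not apply here. (P) keeps the state flows $\cF(s)$ as variables and discards nothing; that constant belongs to the reduced primal \eqref{eq:reduced-primal-lp}, which is a different LP.
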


\begin{proof}
    The Lagrangian is
\[
\begin{aligned}
L(\cF(s),\cF(s \to s'),\alpha,\beta,\eta)
&=
c^\top \cF
+ \sum_{s\neq s_f}
\alpha_s \Bigl(\sum_{v\in\text{out}(s)} \cF(s \to v) - \cF(s)\Bigr)\\
&\qquad
+ \sum_{s\neq s_0}
\beta_s \Bigl(\sum_{u\in\text{in}(s)} \cF(u \to s) - \cF(s)\Bigr)
+ \sum_{x\in X} \eta_x \bigl(\cR(x) - \cF(x \to s_f)\bigr).
\end{aligned}
\]
Now collect all terms involving \(\cF(s)\) and \(\cF(s \to s')\). Since each edge \(s\to s'\) contributes
\(\alpha_s\) through the outgoing constraint at \(s\), and \(\beta_{s'}\) through the incoming
constraint at \(s'\), we obtain
\[
\begin{aligned}
L(\cF(s),\cF(s \to s'),\alpha,\beta,\eta)
&=
\sum_{x\in X} \cR(x)\eta_x
+ \sum_{s\in S} \bigl(c_s - \bar{\alpha}_s - \bar{\beta}_s\bigr) \cF(s)\\
&\qquad
+ \sum_{\substack{(s,s')\in E\\ s'\neq s_f}}
(\alpha_s+\beta_{s'})\, \cF(s \to s')\\
&\qquad
+ \sum_{x\in X}
(\alpha_x+\beta_{s_f}-\eta_x)\, \cF(x \to s_f).
\end{aligned}
\]
The dual function is
\[
g(\alpha,\beta,\eta) := \inf_{\cF(s),\; \cF(s \to s')\ge0} L(\cF(s),\cF(s \to s'),\alpha,\beta,\eta).
\]

We now determine when this infimum is finite.

\medskip
\noindent
\textbf{Step 1: minimization over \(\cF(s)\).}
The variables \(\cF(s)\) are unrestricted in sign inside the Lagrangian. Therefore the infimum over \(\cF(s)\)
is finite if and only if each coefficient of \(\cF(s)\) is zero:
\[
c_s - \bar{\alpha}_s - \bar{\beta}_s = 0,
\qquad \forall s\in S.
\]
Equivalently,
\[
c = \bar{\alpha} + \bar{\beta}.
\]

\medskip
\noindent
\textbf{Step 2: minimization over \(\cF(s \to s')\).}
Each variable \(\cF(s \to s')\) is constrained only by \(\cF(s \to s') \ge 0\).
Hence the infimum over \(\cF(s \to s')\) is finite if and only if each coefficient of \(\cF(s \to s')\) is nonnegative:
\[
\alpha_s + \beta_{s'} \ge 0,
\qquad \forall (s \to s') \text{ with } s'\neq s_f,
\]
and
\[
\alpha_x + \beta_{s_f} - \eta_x \ge 0,
\qquad \forall x\in X.
\]
The latter condition is equivalent to
\[
\eta_x \le \alpha_x + \beta_{s_f},
\qquad \forall x\in X.
\]

Under these conditions,
\[
g(\alpha,\beta,\eta) = \sum_{x\in X} R(x)\eta_x.
\]
Therefore the dual problem is exactly \eqref{app:proof_dual}.

Since \(R(x)>0\) for every \(x\in X\), the optimal choice of \(\eta_x\) saturates the inequality
\[
\eta_x \le \alpha_x + \beta_{s_f},
\]
hence
\[
\eta_x = \alpha_x + \beta_{s_f}.
\]
Therefore \eqref{app:proof_dual} is equivalent to
\[
\begin{aligned}
\max_{\alpha,\beta}\quad
& \sum_{x\in X} R(x)\bigl(\alpha_x+\beta_{s_f}\bigr)\\
\text{s.t.}\quad
& c = \bar{\alpha} + \bar{\beta},\\
& \alpha_s + \beta_{s'} \ge 0,
&& \forall (s \to s') \text{ with } s' \neq s_f.
\end{aligned}
\]

From \(c=\bar{\alpha}+\bar{\beta}\), we obtain
\[
\alpha_{s_0}=0,\qquad \beta_{s_f}=0,\qquad \beta_s = 1-\alpha_s \quad \forall s\in I.
\]
Substituting into the edge inequalities gives
\[
\alpha_s+\beta_{s'} \ge 0
\quad \Longleftrightarrow \quad
\alpha_s + (1-\alpha_{s'}) \ge 0
\quad \Longleftrightarrow \quad
\alpha_{s'}-\alpha_s \le 1.
\]
Renaming \(\alpha_s\) as \(\pi_s\) yields \eqref{app:proof_dual_reduced}.

Note that the derivation of dual to the extended problem \ref{eq:extended-primal-lp} is exactly the same up to an additional equality constraint from the fixed first step.
\end{proof}

\subsection{Shortest Path via Dual Potentials}
\label{app:shortest_path}
\begin{theorem}[Shortest paths dual characterization]
\label{app:proof_shortestpath}
Consider the dual to the primal problem \ref{app:dual_derivation}
\[
\max_{\pi}\ \sum_{x\in X} \cR(x)\pi_x
\qquad\text{subject to}\qquad
\pi_{s_0}=0,
\qquad
\pi_{s'}-\pi_s\le 1
\quad \forall (s \to s') \text{ with } s'\neq s_f.
\]
For every non-sink state \(s\), define
\[
d(s):=\min\left\{k\ge 0:\ \exists\ s_0=v_0\to v_1\to \cdots \to v_k=s\right\}.
\]
Then:
\begin{enumerate}
    \item every feasible solution \(\pi\) satisfies
    \[
    \pi_s \le d(s)
    \qquad \forall s\neq s_f;
    \]
    \item the function \(d(\cdot)\) is a feasible dual solution, and it is dual-optimal;
    \item if \(\cR(x)>0\) for every \(x\in X\), then every optimal solution \(\pi^\star\) satisfies
    \[
    \pi_x^\star = d(x)
    \qquad \forall x\in X.
    \]
\end{enumerate}

Moreover, complementary slackness conditions in the optimum gives us the following relation:
\[\cF^{\star}(s \to s') (\pi_{s'}^\star - (1 + \pi_s^\star)) = 0\]
which is also a Primal-Dual gap.

\end{theorem}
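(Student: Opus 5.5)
Part 1 is proved by induction along shortest paths from $s_0$. Fix a non-sink state $s$ reachable from $s_0$, and take a shortest path $s_0=v_0\to v_1\to\cdots\to v_k=s$ with $k=d(s)$. No edge of this path ends in $s_f$, so every edge satisfies the dual constraint $\pi_{v_{i+1}}-\pi_{v_i}\le 1$. Summing these $k$ inequalities telescopes to $\pi_s-\pi_{s_0}\le k$. With $\pi_{s_0}=0$ this gives $\pi_s\le d(s)$. States not reachable from $s_0$ have $d(s)=+\infty$, so the bound holds trivially. Under Assumption~\ref{graph_assumption}, every terminal state $x$ that carries reward is reachable, because $s_0\to u\leadsto x$ is a path.

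For part 2, we first check that $\pi=d$ is feasible.
\begin{itemize}
\item $d(s_0)=0$ holds by definition.
\item For any edge $s\to s'$ with $s'\ne s_f$, appending the edge to a shortest path to $s$ gives a walk to $s'$ of length $d(s)+1$. Hence $d(s')\le d(s)+1$.
\end{itemize}
If infinite values cause trouble, we restrict to the reachable subgraph: unreachable states carry no flow and can be assigned any potential compatible with the constraints. For optimality, take any feasible $\pi$. Since $\cR\ge 0$, part 1 gives
\[
\sum_x\cR(x)\pi_x\le\sum_x\cR(x)d(x).
\]
So $d$ attains the supremum and is dual-optimal. By strong LP duality, this value equals the primal minimum $\sum_x\cR(x)d(x)$, and the primal is feasible by the construction in the proof of Theorem~\ref{th:otequivalence}, specialised to the case without a leward.

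For part 3, let $\pi^\star$ be optimal. Then
\[
\sum_x\cR(x)\bigl(d(x)-\pi^\star_x\bigr)=0,
\]
and every summand is nonnegative by part 1. Since $\cR(x)>0$ for every $x\in X$, each term must vanish, so $\pi^\star_x=d(x)$.

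The complementary slackness relation comes from the Lagrangian in Proposition~\ref{app:dual_derivation}. The coefficient of $\cF(s\to s')$ there is $\alpha_s+\beta_{s'}$, which in reduced variables equals $1+\pi_s-\pi_{s'}\ge 0$. Take an optimal primal–dual pair. The Lagrangian equals the primal value at feasible $\cF$, because the constraint terms vanish. It also equals the dual value plus $\sum_{(s,s')}\cF^\star(s\to s')(1+\pi^\star_s-\pi^\star_{s'})$, by the collection step in that proof. The $\cF(x\to s_f)$ terms drop out once $\eta_x=\alpha_x+\beta_{s_f}$. The duality gap is therefore exactly this sum of nonnegative terms. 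Zero gap forces each term to vanish, which gives $\cF^\star(s\to s')(\pi^\star_{s'}-(1+\pi^\star_s))=0$.

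The main obstacle is bookkeeping rather than depth. First, the variables $\pi_{s_f}$ and the terminal edges must be handled consistently with the reduction $\eta_x=\alpha_x$, which uses $\beta_{s_f}=0$. Second, unreachable states and the reward-free-state case need separate treatment; there $\eta_x$ is not forced to saturate, so $\pi^\star$ need not equal $d$ off the support.
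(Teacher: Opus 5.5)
Your proof of parts 1--3 is correct and is essentially the paper's proof. It telescopes the edge constraints along a path from $s_0$, gets feasibility of $d$ from $d(s')\le d(s)+1$, gets optimality from $\cR\ge 0$, and uses strict positivity of $\cR$ to force $\pi^\star_x=d(x)$; you phrase that last step as a vanishing sum of nonnegative terms rather than by contradiction. Two of your additions go beyond the paper, and both are sound: the Lagrangian derivation writing the duality gap as $\sum_{(s\to s'),\,s'\neq s_f}\cF^\star(s\to s')\bigl(1+\pi^\star_s-\pi^\star_{s'}\bigr)$ (the paper asserts the complementary slackness relation without proof), and your treatment of unreachable states (the paper ignores reachability).
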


\begin{proof}
We prove the three statements one by one.

\medskip
\noindent
\textbf{Step 1: every feasible \(\pi\) satisfies \(\pi_s\le d(s)\).}

Let \(\pi\) be any feasible solution of the dual problem, and fix a state \(s\neq s_f\).
Take any directed path from \(s_0\) to \(s\):
\[
s_0=v_0 \to v_1 \to \cdots \to v_k=s.
\]
For every edge \(v_i\to v_{i+1}\) in this path, feasibility gives
\[
\pi_{v_{i+1}}-\pi_{v_i}\le 1.
\]
Now sum these inequalities over \(i=0,\dots,k-1\):
\[
\sum_{i=0}^{k-1}\bigl(\pi_{v_{i+1}}-\pi_{v_i}\bigr)\le \sum_{i=0}^{k-1}1 = k.
\]
The left-hand side is a telescoping sum, so it simplifies to
\[
\pi_{v_k}-\pi_{v_0} = \pi_s-\pi_{s_0}.
\]
Since \(\pi_{s_0}=0\), we obtain
\[
\pi_s\le k.
\]
This holds for every path from \(s_0\) to \(s\). Therefore, taking the minimum over all such paths,
\[
\pi_s\le d(s).
\]
This proves part \textup{(i)}.

\medskip
\noindent
\textbf{Step 2: the function \(d(\cdot)\) is feasible.}

Define
\[
\pi^{\max}_s := d(s).
\]
We show that \(\pi^{\max}\) satisfies all dual constraints.

First,
\[
\pi^{\max}_{s_0}=d(s_0)=0.
\]

Next, take any edge \(s\to s'\) with \(s'\neq s_f\).
By definition of \(d(s)\), there exists a shortest path from \(s_0\) to \(s\) of length \(d(s)\):
\[
s_0=v_0\to v_1\to\cdots\to v_{d(s)}=s.
\]
Appending the edge \(s\to s'\) gives a path from \(s_0\) to \(s'\) of length \(d(s)+1\).
Since \(d(s')\) is the length of the shortest path from \(s_0\) to \(s'\), we must have
\[
d(s')\le d(s)+1.
\]
Equivalently,
\[
d(s')-d(s)\le 1.
\]
Thus
\[
\pi^{\max}_{s'}-\pi^{\max}_s \le 1.
\]
So \(\pi^{\max}\) is feasible.

\medskip
\noindent
\textbf{Step 3: \(d(\cdot)\) is dual-optimal.}

Let \(\pi\) be any feasible solution. By Step 1, for every terminal state \(x\in X\),
\[
\pi_x\le d(x).
\]
Since the rewards are nonnegative,
\[
\sum_{x\in X} \cR(x)\pi_x \le \sum_{x\in X} \cR(x)d(x).
\]
But Step 2 shows that the choice \(\pi^{\max}_s=d(s)\) is feasible, and for this choice the objective value is exactly
\[
\sum_{x\in X} \cR(x)d(x).
\]
Therefore no feasible solution can have a larger objective value, and \(d(\cdot)\) is dual-optimal.
This proves part \textup{(ii)}.

\medskip
\noindent
\textbf{Step 4: Every optimal solution agrees with \(d\) on terminals.}

Let \(\pi^\star\) be any optimal solution.
By Step 1,
\[
\pi^\star_x\le d(x)
\qquad \forall x\in X.
\]
Suppose, for contradiction, that there exists some terminal state \(x_0\in X\) such that
\[
\pi^\star_{x_0}<d(x_0).
\]
Because all rewards are strictly positive, this would imply
\[
\sum_{x\in X}\cR(x)\pi^\star_x
<
\sum_{x\in X}\cR(x)d(x).
\]
But the right-hand side is the optimal value, since it is attained by the feasible solution \(d(\cdot)\).
This contradicts the optimality of \(\pi^\star\).

Hence no such \(x_0\) can exist, and therefore
\[
\pi^\star_x=d(x)
\qquad \forall x\in X.
\]
This proves part \textup{(iii)}.
    
\end{proof}


\subsection{Proof of dual equivalence of OT and GFlowNet}
\label{app:dualgflowot}
\begin{proof}
It is easy to see that dual to the extended primal problem is of the following nature (see Appendix \ref{app:dual_derivation} for derivation)
\begin{equation}
\begin{aligned}
    \max_{\pi} \quad 
    &\sum_{x\in X} \cR(x) \pi_x + \sum_{u \in U} \cL(u)(-\pi_u) \\
    \text{s.t.}\quad
    & \pi_{s'} - \pi_s \le 1, \forall s\rightarrow s': s \neq s_0, s' \neq s_f \\
    & \pi_{s_0} = 0 
\end{aligned} 
\end{equation}
if we now redefine $a_x = \pi_x, b_u = -\pi_u$, then taking any path $p$:
\[a_x + b_u = \pi_x -\pi_u=\sum_{i=0}^{|p|} \pi_{p_{i + 1}} - \pi_{p_i} \leq d(u, x)\]
Hence:
\begin{equation}
\begin{aligned}
    \max_{a, b} \quad 
    &\sum_{x\in X} \cR(x) a_x + \sum_{u \in U} \cL(u)b_u \\
    \text{s.t.}\quad
    & a_x + b_u \le d(u, x)\\
\end{aligned} 
\end{equation}
Which has exactly the same functional form in decision variables $a_x, b_u$ as Dual to Problem \ref{eq:ot-kantorovich}
    
\end{proof}

\section{Experimental details}
\label{app:exprimental_details}
All models are parameterized by MLP with 2 hidden layers and 128 hidden size, which accept a one-hot encoding of $s$ as input. $\cF_{\theta}(s), \PF(s' \mid s, \theta), \PB(s \mid s', \theta)$ share the same backbone, with different linear heads predicting  the logits of the forward policy and the logits of the backward policy. We train our models on-policy, using batch size $512$.  We use AdamW optimzier with $\text{lr} =10^{-3}$ for all our experiments and weight decay $10^{-4}$. Our implementations are based on the published code of \cite{pmlr-v267-morozov25a}. All experiments were performed on CPUs.

\subsection{Hypergrid distribution definitions}
\label{app:grid_app}
We define moon shaped distribution by the following formula: 
\[
\cL(s) \triangleq \ind\left[\|z-c\|_2^2 \le r_{\mathrm{out}}^2\right]\left(1-\ind\left[\|z-(c-\delta e_1)\|_2^2 \le r_{\mathrm{in}}^2\right]\right)\left(0.5+2\min\left\{1,\max\left\{0,1-\frac{\|z-b\|_2}{r_{\mathrm{out}}}\right\}\right\}\right)+\varepsilon,
\]

Where:
\[
z=\frac{s}{M-1},
\qquad
c=\left(\frac12,\dots,\frac12\right),
\qquad
e_1=(1,0,\dots,0),
\qquad
b=c+\frac{r_{\mathrm{out}}}{2}e_1.
\]
and $r_{\text{in}}, r_{\text{out}}$ are inner and outer balls radius.

The ball-shaped distribution defined as follows: 

\[
\cL(s) \triangleq \ind\left[\|z-c\|_2^2 \le r_{\mathrm{out}}^2\right]\left(0.5+2\min\left\{1,\max\left\{0,1-\frac{\|z-b\|_2}{r_{\mathrm{out}}}\right\}\right\}\right)+\varepsilon,
\]

The corner-shaped distribution definition follows previous works ~\cite{madan2023learning, malkin2022trajectory}: 
\[
\cR(s) \triangleq R_0 + R_1 \prod_{i = 1}^D \mathbb{I}\left[0.25 < \left|\frac{s^i}{H-1}-0.5\right|\right] + R_2 \prod_{i = 1}^D \mathbb{I}\left[0.3 < \left|\frac{s^i}{H-1}-0.5\right| < 0.4\right]\eqsp,
\]

\begin{figure}
    \centering
    \includegraphics[width=1\linewidth]{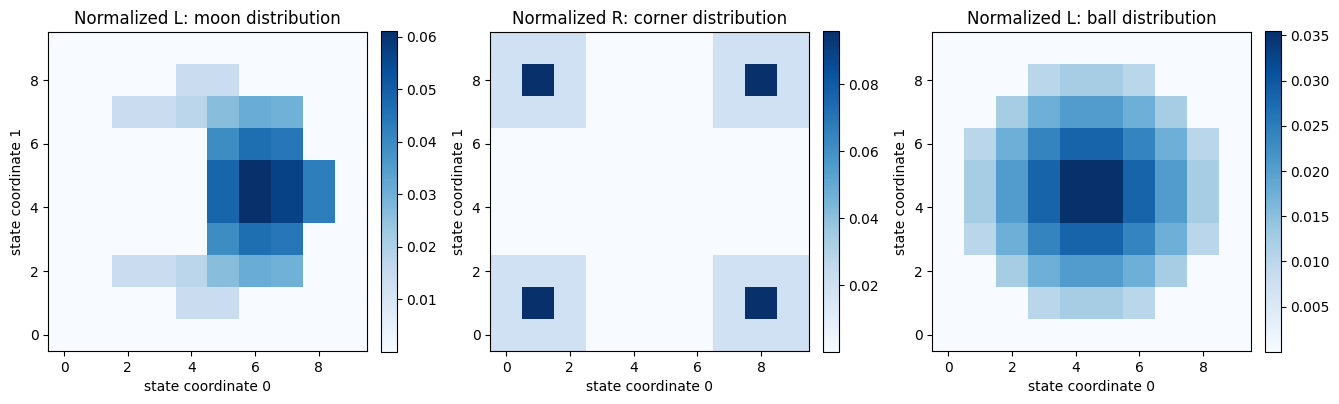}
    \caption{Visualization of the distributions used in the experiments: moon-shaped (left), corner-shaped (center), and ball-shaped (right).\vspace{-0.2cm}}
    \label{fig:rewards}
\end{figure}

See Figure \ref{fig:rewards} for visualization.

\subsection{Permutations}
\label{app:perms_metrics}
Denote $C(k)$ as the probability that a permutation sampled from the reward distribution has $k$ fixed points. We compute the $L^1$ error between the vector $(C(0), C(1), \dots, C(n))$ and its empirical estimate over the last $10^5$ samples seen in training for convergence diagnostics.

Suppose that $x_1, \dots, x_m$ is a set of GFlowNet samples (terminal states of trajectories sampled from $\PF$). Then, the empirical $L_1$ error of fixed point probabilities is defined as:
$$
\sum_{k=0}^N \left| C(k) - \frac{1}{m}\sum_{i=1}^m \mathbb{I}\{x_i(k)=k\}\right|,
$$
We use the exact formula from \cite{pmlr-v267-morozov25a} for exact computation of $C(k)$.
\end{document}